\newtheorem{lemma}{Lemma}
\newtheorem{theorem}{Theorem}
\newtheorem*{remark}{Remark}
\title{How to scale distributed deep learning?}
\author{
  Peter H.~Jin,
  Qiaochu Yuan,
  Forrest Iandola,
  and Kurt Keutzer \\
  Department of Electrical Engineering and Computer Sciences \\
  University of California, Berkeley \\
  {\texttt{\{phj,qyuan,forresti,keutzer\}@berkeley.edu}} \\
}
\begin{document}

\maketitle

\begin{abstract}
Training time on large datasets for deep neural networks is the principal workflow bottleneck in a number of important applications of deep learning, such as object classification and detection in automatic driver assistance systems (ADAS). To minimize training time, the training of a deep neural network must be scaled beyond a single machine to as many machines as possible by distributing the optimization method used for training. While a number of approaches have been proposed for distributed stochastic gradient descent (SGD), at the current time synchronous approaches to distributed SGD appear to be showing the greatest performance at large scale. Synchronous scaling of SGD suffers from the need to synchronize all processors on each gradient step and is not resilient in the face of failing or lagging processors. In asynchronous approaches using parameter servers, training is slowed by contention to the parameter server. In this paper we compare the convergence of synchronous and asynchronous SGD for training a modern ResNet network architecture on the ImageNet classification problem. We also propose an asynchronous method, gossiping SGD, that aims to retain the positive features of both systems by replacing the all-reduce collective operation of synchronous training with a gossip aggregation algorithm. We find, perhaps counterintuitively, that asynchronous SGD, including both elastic averaging and gossiping, converges faster at fewer nodes (up to about 32 nodes), whereas synchronous SGD scales better to more nodes (up to about 100 nodes).
\end{abstract}

\section{Introduction}






Estimates of the data gathered by a self-driving car start from at least 750 MB/s.%
\footnote{
  \texttt{http://www.kurzweilai.net/googles-self-driving-car-gathers-nearly-1-gbsec}
}
With proper annotation or through an unsupervised learning scheme,
all of this data can become useful for training the object detection system or
grid-occupancy system of a self-driving car.
The resulting training set can lead to weeks or more of training time on
a single CPU/GPU system.
Therefore, for such applications training time defines the most time
consuming element of the workflow, and reduced training time is highly
desirable.

To achieve significant reductions in training time, the training must be
distributed across multiple CPUs/GPUs with the goal of strong scaling:
as more nodes are thrown at the problem, the training time should ideally
decrease proportionally.
There are two primary approaches to distributed stochastic gradient descent
(SGD) for training deep neural networks:
(i) synchronous all-reduce SGD based on a fast all-reduce collective
communication operation
\citep{baidu_deepimage,firecaffe_arxiv,intel_sync,revisiting_sync},
and 
(ii) asynchronous SGD using a parameter server \citep{downpour,msr_adam}.

Both approaches (i) and (ii) have weaknesses at scale.
Synchronous SGD is penalized by straggling processors, underutilizes compute
resources, and is not robust in the face of failing processors or nodes.
On the other hand, asynchronous approaches using parameter servers
create a communication bottleneck and underutilize the available network
resources, slowing convergence.

Individual researchers also have different numbers of nodes at their
disposal, including compute and network resources.
So determining the best approach for a given number of nodes, as well as the
approach that scales to the most number of nodes, is of interest to
practitioners with finite resources.

We are concerned with the following questions:
{
\renewcommand{\theenumi}{\alph{enumi}}
\begin{enumerate}
  \item How fast do asynchronous and synchronous SGD algorithms converge
    at both the beginning of training (large step sizes)
    and at the end of training (large step sizes)?
  \item How does the convergence of async.~and sync.~SGD
    vary with the number of nodes?
\end{enumerate}
}
To compare the strengths and weaknesses of asynchronous and synchronous
SGD algorithms, we train a modern ResNet convolutional network \citep{resnet}
on the ImageNet dataset \citep{ilsvrc} using various distributed SGD methods.
We primarily compare synchronous all-reduce SGD, the recently proposed
asynchronous elastic averaging SGD \citep{easgd},
as well as our own method, asynchronous gossiping SGD, based on an algorithm
originally developed in a different problem setting
\citep{Ram+Nedic+Veeravalli_2009}.
Gossiping SGD is an asynchronous method that does not use a centralized
parameter server, and
in a sense, gossiping is a decentralized version of elastic averaging.
We find that asynchronous SGD, including both elastic averaging and gossiping,
exhibits the best scaling at larger step sizes and, perhaps counterintuitively,
at smaller scales (up to around 32 distributed nodes).
For smaller step sizes and at larger scales, all-reduce consistently converges
to the most accurate solution faster than the asynchronous methods.








\section{Background}

In this section, we will describe the baseline synchronous and asynchronous
SGD methods,
as well as a recently proposed asynchronous method that is more scalable than
its predecessor.
We will use the following naming convention for SGD:
$\theta$ are the parameters over which the objective is minimized,
$\tilde\theta$ is the center parameter (if applicable),
$\alpha$ is the step size,
$\mu$ is the momentum,
subscript $i$ refers to the $i$-th node out of $p$ total nodes,
and subscript $t$ refers to the $t$-th (minibatch) iteration.
Additionally, $b$ will refer to the \emph{per-node} minibatch size,
whereas $m$ will refer to the \emph{aggregate} minibatch size summed across
all nodes.

\subsection{Synchronous All-Reduce SGD}

In traditional synchronous all-reduce SGD, there are two alternating
phases proceeding in lock-step:
(1) each node computes its local parameter gradients, and
(2) all nodes collectively communicate all-to-all to compute an aggregate
gradient, as if they all formed a large distributed minibatch.
The second phase of exchanging gradients forms a barrier and is the
communication-intensive phase, usually implemented by an eponymous
all-reduce operation.
The time complexity of an all-reduction can be decomposed into latency-bound
and bandwidth-bound terms.
Although the latency term scales with $O(\log(p))$,
there are fast ring algorithms which have bandwidth term
independent of $p$ \cite{collective}.
With modern networks capable of handling bandwidth on the order of 1--10 GB/s
combined with neural network parameter sizes on the order of 10--100 MB,
the communication of gradients or parameters between nodes across a network can
be very fast.
Instead, the communication overhead of all-reduce results from its use of a
synchronization barrier, where all nodes must wait for all other nodes until the
all-reduce is complete before proceeding to the next stochastic gradient
iteration.
This directly leads to a straggler effect where the slowest nodes will prevent
the rest of the nodes from making progress.
Examples of large-scale synchronous data parallel SGD for distributed deep
learning are given in
\citep{baidu_deepimage},
\citep{firecaffe_arxiv},
\citep{intel_sync},
and \citep{revisiting_sync}.
We provide pseudocode for synchronous data-parallel SGD in
Algorithm \ref{alg:sync-sgd}.
\begin{algorithm}
  \caption{Synchronous all-reduce SGD.}
  \label{alg:sync-sgd}
  \begin{algorithmic}
    \STATE initialize $\theta_{0,i}\gets\theta_0$
    \FOR{$t\in\{0,\ldots,T\}$}
      \STATE $\Delta\theta_{t,i} \gets -\alpha_t\nabla f_i(\theta_{t,i};X_{t,i})+\mu\Delta\theta_{t-1}$
      \STATE $\Delta\theta_{t} \gets \text{all-reduce-average}(\Delta\theta_{t,i})$
      \STATE $\theta_{t+1,i} \gets \theta_{t,i}+\Delta\theta_{t}$
    \ENDFOR
  \end{algorithmic}
\end{algorithm}

\subsection{Asynchronous Parameter-Server SGD}


A different approach to SGD consists of each node asynchronously performing
its own gradient updates and occasionally synchronizing its parameters with a
central parameter store.
This form of asynchronous SGD was popularized by ``Hogwild'' SGD
\citep{hogwild},
which considered solving sparse problems on single machine shared memory
systems.
``Downpour'' SGD \citep{downpour} then generalized the approach to distributed
SGD where nodes communicate their gradients with a central
\emph{parameter server}.
The main weakness of the asynchronous parameter-server approach to SGD is that
the workers communicate all-to-one with a central server,
and the communication throughput is limited
by the finite link reception bandwidth at the server.
One approach for alleviating the communication bottleneck is
introducing a delay between rounds of communication,
but increasing the delay greatly decreases the rate of convergence \citep{easgd}.
Large scale asynchronous SGD for deep learning was first implemented in
Google DistBelief \citep{downpour}
and has also been implemented in \citep{msr_adam};
large scale parameter server systems in the non-deep learning setting have also
been demonstrated in \citep{param_server_2013} and \citep{param_server_2014}.

\subsection{Elastic Averaging SGD}


Elastic averaging SGD \citep{easgd} is a new algorithm belonging to the family
of asynchronous parameter-server methods
which introduces a modification to the usual stochastic gradient objective to
achieve faster convergence.
Elastic averaging seeks to maximize the consensus between
the center parameter $\tilde\theta$ and the local parameters $\theta_i$
in addition to the loss:
\begin{align}
  F_\text{consensus}(\theta_1,\ldots,\theta_p,\tilde\theta)
  &=\sum_{i=1}^p \left[ f(\theta_i;X_i) + \frac{\rho}{2}\|\theta_i-\tilde\theta\|^2 \right].
\end{align}
The elastic averaging algorithm is given in Algorithm \ref{alg:elastic-sgd}.
The consensus objective of elastic averaging is closely related to the augmented
Lagrangian of ADMM, and the gradient update derived from the consensus objective
was shown by \citep{easgd} to converge significantly faster than vanilla async
SGD.
However, as elastic averaging is a member of the family of asynchronous
parameter-server approaches,
it is still subject to a communication bottleneck between the central server
and the client workers.

Because recent published results indicate that elastic averaging dominates
previous asynchronous parameter-server methods \citep{easgd},
we will only consider elastic averaging from this point on.
\begin{algorithm}
  \caption{
    Elastic averaging SGD.
    The hyperparameter $\beta$ is the moving rate.
  }
  \label{alg:elastic-sgd}
  \begin{minipage}[t]{0.5\textwidth}
    \begin{algorithmic}
      \STATE \# client code
      \STATE initialize $\theta_{0,i}\gets\theta_0$
      \FOR{$t\in\{0,\ldots,T\}$}
        \IF{$t>0$ and $t \equiv 0 \mod \tau$}
          \STATE $\tilde\theta \gets \text{receive-server-param}()$
          \STATE $\text{send-param-update}(+\beta(\theta_{t,i}-\tilde\theta))$
          \STATE $\theta_{t,i} \gets \theta_{t,i}-\beta(\theta_{t,i}-\tilde\theta)$
        \ENDIF
        \STATE $\Delta\theta_{t,i} \gets -\alpha_t\nabla f_i(\theta_{t,i};X_{t,i})+\mu\Delta\theta_{t-1,i}$
        \STATE $\theta_{t+1,i} \gets \theta_{t,i}+\Delta\theta_{t,i}$
      \ENDFOR
    \end{algorithmic}
  \end{minipage}
  \begin{minipage}[t]{0.5\textwidth}
    \begin{algorithmic}
      \STATE \# server code
      \STATE initialize $\tilde\theta_0\gets\theta_0$
      \FOR{$t\in\{0,\ldots,T\}$}
        \STATE $\text{send-server-param}(\tilde\theta_t)$
        \STATE $\Delta\theta \gets \text{receive-param-update}()$
        \STATE $\tilde\theta_{t+1} \gets \tilde\theta_t+\Delta\theta$
      \ENDFOR
    \end{algorithmic}
  \end{minipage}
\end{algorithm}

\section{Gossiping SGD}

\subsection{Algorithm}

In a nutshell, the synchronous all-reduce algorithm consists of two repeating
phases: (1) calculation of the local gradients at each node, and (2) exact
aggregation of the local gradients via all-reduce.
To derive gossiping SGD, we would like to replace the synchronous all-reduce
operation with a more asynchronous-friendly communication pattern.
The fundamental building block we use is a \emph{gossip aggregation} algorithm
\citep{Kempe+Dobra+Gehrke_2003,Boyd++_2006},
which combined with SGD leads to the gossiping SGD algorithm.
Asynchronous gossiping SGD was introduced in
\citep{Ram+Nedic+Veeravalli_2009} for the general case of a sparse
communication graph between nodes (e.g.~wireless sensor networks).
The original problem setting of gossiping also typically involved synchronous
rounds of communication, whereas we are most interested in asynchronous gossip.

The mathematical formulation of the gossiping SGD update can also be derived by
conceptually linking gossiping to elastic averaging.
Introduce a distributed version of the global consensus objective,
in which the center parameter is replaced with the average of the local
parameters:
\begin{align}
  F_\text{dist-consensus}(\theta_1,\ldots,\theta_p)
  &=\sum_{i=1}^p \left[ f(\theta_i;X_i) + \frac{\rho}{2}\left\| \theta_i-\frac{1}{p}\sum_{j=1}^p\theta_j \right\|^2 \right].
\end{align}%
%
%
The corresponding gradient steps look like the following:
\begin{align}
  \theta_{t,i}'
  &=\theta_{t,i} - \alpha\nabla f(\theta_{t,i};X_i) \\
  \theta_{t+1,i}
  &=\theta_{t,i}' - \beta\left( \theta_{t,i}'-\frac{1}{p}\sum_{j=1}^p \theta_{t,j}' \right).
\end{align}
If we replace the distributed mean $\frac{1}{p}\sum_{j=1}^p\theta_{t,j}$ with
the unbiased one-node estimator $\theta_{t,j_{t,i}}$,
such that $j_{t,i}\sim\text{Uniform}(\{1,\ldots,p\})$
and $\mathbb E[\theta_{t,j_{t,i}}]=\frac{1}{p}\sum_{j=1}^p\theta_{t,j}$,
then we derive the gossiping SGD update:
\begin{align}
  \theta_{t,i}'
  &=\theta_{t,i} - \alpha\nabla f(\theta_{t,i};X_i) \\
  \theta_{t+1,i}
  &=\theta_{t,i}' - \beta ( \theta_{t,i}'-\theta_{j_{t,i},t}' ) \\
  &=(1-\beta)\theta_{t,i}' + \beta\theta_{j_{t,i},t}'.
\end{align}
To make this more intuitive, we describe a quantity related to the distributed
consensus, the \emph{diffusion potential}.
Fix $\theta_i^{(0)}=\theta_{t,i}$, and consider the synchronous gossip setting
of purely calculating an average of the parameters and where there are no
gradient steps.
Then the updated parameter after $k$ repeated gossip rounds, $\theta_i^{(k)}$,
can be represented as a
weighted average of the initial parameter values $\theta_j^{(0)}$.
Denoting the weighted contribution of $\theta_j^{(0)}$ toward $\theta_i^{(k)}$
by $v_{i,j}^{(k)}$,
the diffusion potential is:
\begin{align}
  \Phi_\text{diffusion}(\theta_1^{(k)},\ldots,\theta_p^{(k)})
  &=\sum_{i=1}^p \sum_{j=1}^p \left\| v_{i,j}^{(k)}\theta_j^{(0)} - \frac{1}{p}\sum_{j'=1}^p v_{i,j'}^{(k)}\theta_{j'}^{(0)} \right\|^2.
\end{align}
It can be shown that repeated rounds of a form of gossiping
reduces the diffusion potential by a fixed rate per round
\citep{Kempe+Dobra+Gehrke_2003}.

If $j_{t,i}$ is chosen uniformly as above, then the algorithm is equivalent to
``pull-gossip,'' i.e.~each node pulls or receives $\theta_j$ from one and only
one other random node per iteration.
On the other hand, if we replace the ``one-node estimator'' with querying
$\theta_j$ from multiple nodes, with the constraint that each $j$ is represented
only once per iteration, then the algorithm becomes ``push-gossip,''
i.e.~each node pushes or sends its own $\theta_i$ to one and only one other
random node, while receiving from between zero and multiple other nodes.
Push-gossiping SGD can be interpreted as an interleaving of a gradient step and
a simplified push-sum gossip step
\citep{Kempe+Dobra+Gehrke_2003}.
Algorithms \ref{alg:pull-gossip-sgd} and \ref{alg:push-gossip-sgd} describe
pull-gossiping and push-gossiping SGD respectively.

\begin{minipage}[t]{0.48\textwidth}
\begin{algorithm}[H]
  \caption{Pull-gossiping SGD.}
  \label{alg:pull-gossip-sgd}
  \begin{algorithmic}
    \STATE initialize $\theta_{0,i}\gets\theta_0$
    \FOR{$t\in\{0,\ldots,T\}$}
      \IF{$t>0$ and $t \equiv 0 \mod \tau$}
        \STATE set $x_{i} \gets \theta_{t,i}$
        \STATE choose a target $j$
        \STATE $\theta_{t,i} \gets$ average of $x_{i},x_{j}$
      \ENDIF
      \STATE $\Delta\theta_{t,i} \gets -\alpha_t\nabla f_i(\theta_{t,i};X_{t,i})+\mu\Delta\theta_{t-1,i}$
      \STATE $\theta_{t+1,i} \gets \theta_{t,i}+\Delta\theta_{t,i}$
    \ENDFOR
  \end{algorithmic}
\end{algorithm}
\end{minipage}
\begin{minipage}[t]{0.48\textwidth}
\begin{algorithm}[H]
  \caption{Push-gossiping SGD.}
  \label{alg:push-gossip-sgd}
  \begin{algorithmic}
    \STATE initialize $\theta_{0,i}\gets\theta_0$
    \FOR{$t\in\{0,\ldots,T\}$}
      \IF{$t>0$ and $t \equiv 0 \mod \tau$}
        \STATE set $x_{i} \gets \theta_{t,i}$
        \STATE choose a target $j$
        \STATE send $x_{i}$ to $i$ (ourselves) and to $j$
        \STATE $\theta_{t,i} \gets$ average of received $x$'s
      \ENDIF
      \STATE $\Delta\theta_{t,i} \gets -\alpha_t\nabla f_i(\theta_{t,i};X_{t,i})+\mu\Delta\theta_{t-1,i}$
      \STATE $\theta_{t+1,i} \gets \theta_{t,i}+\Delta\theta_{t,i}$
    \ENDFOR
  \end{algorithmic}
\end{algorithm}
\end{minipage}

\subsection{Analysis}


Our analysis of gossiping SGD is based on the analyses in
\citep{nesterov_book,Boyd++_2006,Ram+Nedic+Veeravalli_2009,Touri+Nedic+Ram_2010,easgd}.
We assume that all processors are able to communicate with all other processors
at each step.
The main convergence result is the following:
\begin{theorem}
Let $f$ be a $m$-strongly convex function with $L$-Lipschitz gradients. Assume that we can sample gradients $g = \nabla f(\theta; X_i) + \xi_i$ with additive noise with zero mean $\mathbb{E}[\xi_i] = 0$ and bounded variance $\mathbb{E}[\xi_i^T \xi_i] \leq \sigma^2$. Then, running the asynchronous pull-gossip algorithm, with constant step size $0 < \alpha \leq \tfrac{2}{m+L}$, the expected sum of squares convergence of the local parameters to the optimal $\theta_\ast$ is bounded by 
\begin{align}
\mathbb{E}[\Vert \theta_t - \theta_\ast \mathbf{1} \Vert^2] \leq \left( 1 - \frac{2\alpha}{p} \frac{mL}{m+L} \right)^t \Vert \theta_0 - \theta_\ast \mathbf{1} \Vert^2 + p \alpha \sigma^2 \frac{m+L}{2mL}
\end{align}
Furthermore, with the additional assumption that the gradients are uniformly bounded as $\sup \vert \nabla f(\theta) \vert \leq C$, the expected sum of squares convergence of the local parameters to the mean $\bar{\theta}_t$ is bounded by
\begin{align}
\mathbb{E}[\Vert \theta_t - \bar{\theta}_t \mathbf{1} \Vert^2] &\leq \left( \lambda \left( 1- \frac{\alpha \, m}{p} \right) \right)^t \Vert \theta_0 - \bar{\theta}_0 \mathbf{1} \Vert^2  + \frac{\lambda \alpha^2 (C^2+\sigma^2)}{1 - \lambda \left( 1- \frac{\alpha \, m}{p} \right)} \\
\text{ where } \lambda &= 1 - \frac{2\beta(1-\beta)}{p} - \frac{2 \beta^2}{p}
\end{align}
\end{theorem}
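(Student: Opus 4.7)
My plan is to treat the pull-gossip update as a composition of a per-node stochastic gradient step and a random doubly-stochastic mixing step, and then separately control the ``mean'' mode (distance to $\theta_\ast$) and the ``disagreement'' mode (distance to the instantaneous mean). Stack the local iterates into a matrix $\Theta_t \in \mathbb{R}^{p\times d}$ with rows $\theta_{t,i}^\top$, and write the per-iteration update as
\begin{align}
\Theta_t' = \Theta_t - \alpha G_t, \qquad \Theta_{t+1} = W_t \Theta_t',
\end{align}
where $G_t$ stacks the noisy gradients $\nabla f(\theta_{t,i}) + \xi_{t,i}$ and $W_t$ is the random row-stochastic pull-gossip averaging matrix induced by the uniform draws $j_{t,i}$. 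The key structural facts I will exploit are (i) $W_t \mathbf{1} = \mathbf{1}$ and $\|W_t\|_2 \le 1$, and (ii) on the subspace orthogonal to $\mathbf{1}$, the operator $\mathbb{E}[W_t^\top (I-P) W_t]$ has spectral norm exactly $\lambda = 1 - \tfrac{2\beta(1-\beta)}{p} - \tfrac{2\beta^2}{p}$, where $P = \tfrac{1}{p}\mathbf{1}\mathbf{1}^\top$. Fact (ii) is obtained by a direct second-moment computation of the gossip operator, in the style of Boyd et al.\ and Ram--Nedi\'c--Veeravalli.

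For the first inequality, I first use the Frobenius-norm non-expansiveness of $W_t$ to write $\|\Theta_{t+1} - \theta_\ast \mathbf{1}\|_F^2 = \|W_t(\Theta_t' - \theta_\ast \mathbf{1})\|_F^2 \le \|\Theta_t' - \theta_\ast \mathbf{1}\|_F^2$. Then for each row, the standard Nesterov co-coercivity inequality for $m$-strongly convex, $L$-smooth $f$ with $\alpha \le 2/(m+L)$ gives
\begin{align}
\|\theta_{t,i} - \alpha \nabla f(\theta_{t,i}) - \theta_\ast\|^2 \le \left(1 - \tfrac{2\alpha mL}{m+L}\right)\|\theta_{t,i}-\theta_\ast\|^2.
\end{align}
I sum over $i$, take conditional expectations over the gradient noise (the cross-term vanishes since $\mathbb{E}[\xi_{t,i}] = 0$) to pick up a $p\alpha^2\sigma^2$ contribution, and obtain a one-step contraction. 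Unrolling and summing the resulting geometric series yields the first bound; the $1/p$ in the stated rate reflects the theorem's convention on how $f$ aggregates across nodes, which I would track explicitly through the recursion.

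For the second inequality, I project everything onto the complement of $\mathbf{1}$ via $I-P$. Because $W_t \mathbf{1} = \mathbf{1}$, the gossip step obeys $(I-P)\Theta_{t+1} = (I-P) W_t (I-P) \Theta_t'$, so taking the expected squared Frobenius norm contributes the factor $\lambda$ above. The gradient step then splits into (i) a contractive factor $(1 - \alpha m/p)$ obtained by applying strong convexity to the \emph{pairwise differences} $\theta_{t,i} - \bar\theta_t$, and (ii) an additive term of order $\alpha^2(C^2+\sigma^2)$ that comes from bounding the squared norm of the projected gradient matrix $(I-P)G_t$ using the uniform gradient bound $C$ and noise variance $\sigma^2$. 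Assembling these into a one-step recursion of the form $a_{t+1} \le \lambda(1-\alpha m/p)\, a_t + \lambda \alpha^2(C^2+\sigma^2)$ and summing the geometric series yields the claimed bound.

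The main technical obstacle will be computing the spectral gap $\lambda$ correctly for the asynchronous pull-gossip operator: one must simultaneously handle all $p$ independent pulls and verify that the combined moment $\mathbb{E}[W_t^\top (I-P) W_t]$ has the stated norm on the disagreement subspace, which is precisely where the $\beta(1-\beta)$ and $\beta^2/p$ corrections arise. A secondary subtlety is cleanly decoupling the gradient-noise expectation from the gossip-randomness expectation in the cross terms; I would handle this by a tower-of-expectations argument, conditioning first on the gradient step and then taking expectation over $W_t$.
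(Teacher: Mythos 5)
There is a genuine gap: you have set up and analyzed the \emph{synchronous} pull-gossip iteration, not the asynchronous algorithm the theorem is about. In your model every node takes a gradient step and pulls at every iteration, $\Theta_{t+1}=W_t(\Theta_t-\alpha G_t)$ with $W_t$ built from all $p$ independent draws $j_{t,i}$. The paper's asynchronous theorem instead uses a Poisson-clock time model: at each master-clock tick a \emph{single} uniformly chosen node $i$ performs the gradient step and the pull, so the update matrix is $D=I+\beta e_i(e_{j_i}-e_i)^T$ and the gradient vector has only one nonzero row. The $1/p$ factors in the statement --- the contraction rate $1-\tfrac{2\alpha}{p}\tfrac{mL}{m+L}$, the factor $1-\tfrac{\alpha m}{p}$, and the $1/p$ terms inside $\lambda=1-\tfrac{2\beta(1-\beta)}{p}-\tfrac{2\beta^2}{p}$ --- all come from averaging over which single node is active at a tick, not from ``the theorem's convention on how $f$ aggregates across nodes.'' Under your all-nodes-update model you would instead get the synchronous-style bound (per-step contraction $1-\tfrac{2\alpha mL}{m+L}$, noise $p\alpha^2\sigma^2$), and your key spectral claim fails: for the synchronous mixing matrix the paper computes $\mathbb{E}[M_I^TM_I]=\tfrac12\bigl(I+\tfrac1p\mathbf{1}\mathbf{1}^T\bigr)$, whose eigenvalues lie in $[\tfrac12,1]$, so the disagreement contraction is of constant order rather than $1-O(\beta/p)$; the stated $\lambda$ is obtained only by computing $\mathbb{E}\bigl[D^TD-\tfrac1p D^T\mathbf{1}\mathbf{1}^TD\bigr]$ for the single-update matrix $D$. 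So as written your plan proves a different theorem, and the route to the claimed constants requires introducing the asynchronous single-activation randomness before taking second moments.

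A secondary flaw: your structural fact $\Vert W_t\Vert_2\le 1$ is false pointwise for pull-gossip (several nodes may pull from the same source, and even $D$ with $\beta=\tfrac12$ can have spectral norm exceeding $1$). Non-expansiveness holds only in expectation, via $\mathbb{E}[W_t^TW_t]\preceq I$, which is exactly how the paper argues; your first-bound step ``$\Vert W_t(\Theta_t'-\theta_\ast\mathbf{1})\Vert_F^2\le\Vert\Theta_t'-\theta_\ast\mathbf{1}\Vert_F^2$'' must therefore be replaced by a conditional-expectation bound using the second-moment matrix. The remaining ingredients of your plan (co-coercivity per row with $\alpha\le\tfrac{2}{m+L}$, projecting onto the complement of $\mathbf{1}$, convexity to kill the $f(\bar\theta_t)-f(\theta_{t,i})$ cross terms, the uniform bound $C$ and variance $\sigma^2$ for the additive term, and unrolling the recursions) do mirror the paper's argument and would go through once the asynchronous model and the expectation-level spectral bounds are in place.
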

For the proofs, please see subsection (6.1) in the supplementary material.

\section{Experiments}

\begin{figure}[!ht]
  \centering


  \begin{subfigure}[b]{0.32\textwidth}
    \includegraphics[width=\textwidth]{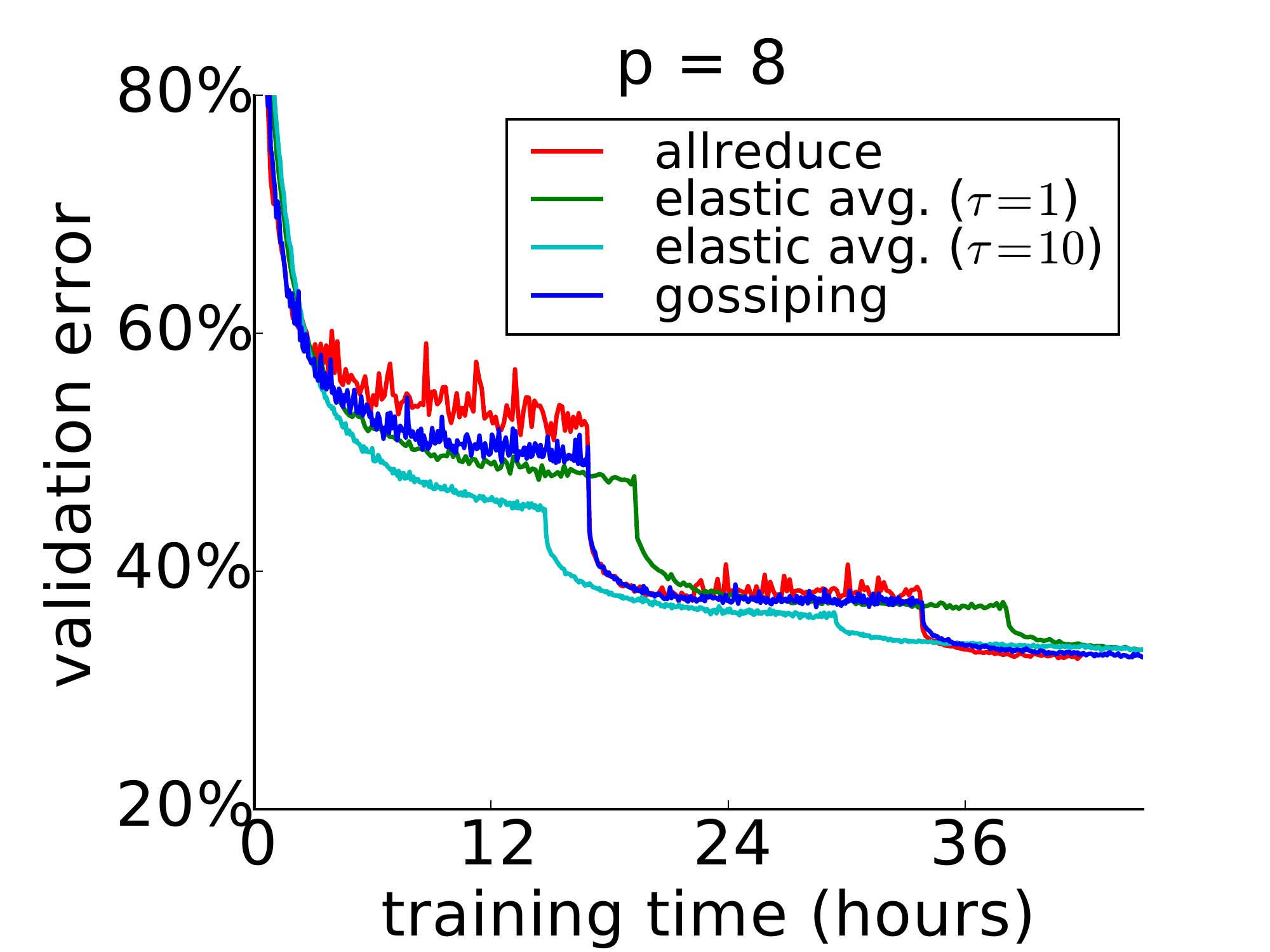}
  \end{subfigure}
  \begin{subfigure}[b]{0.32\textwidth}
    \includegraphics[width=\textwidth]{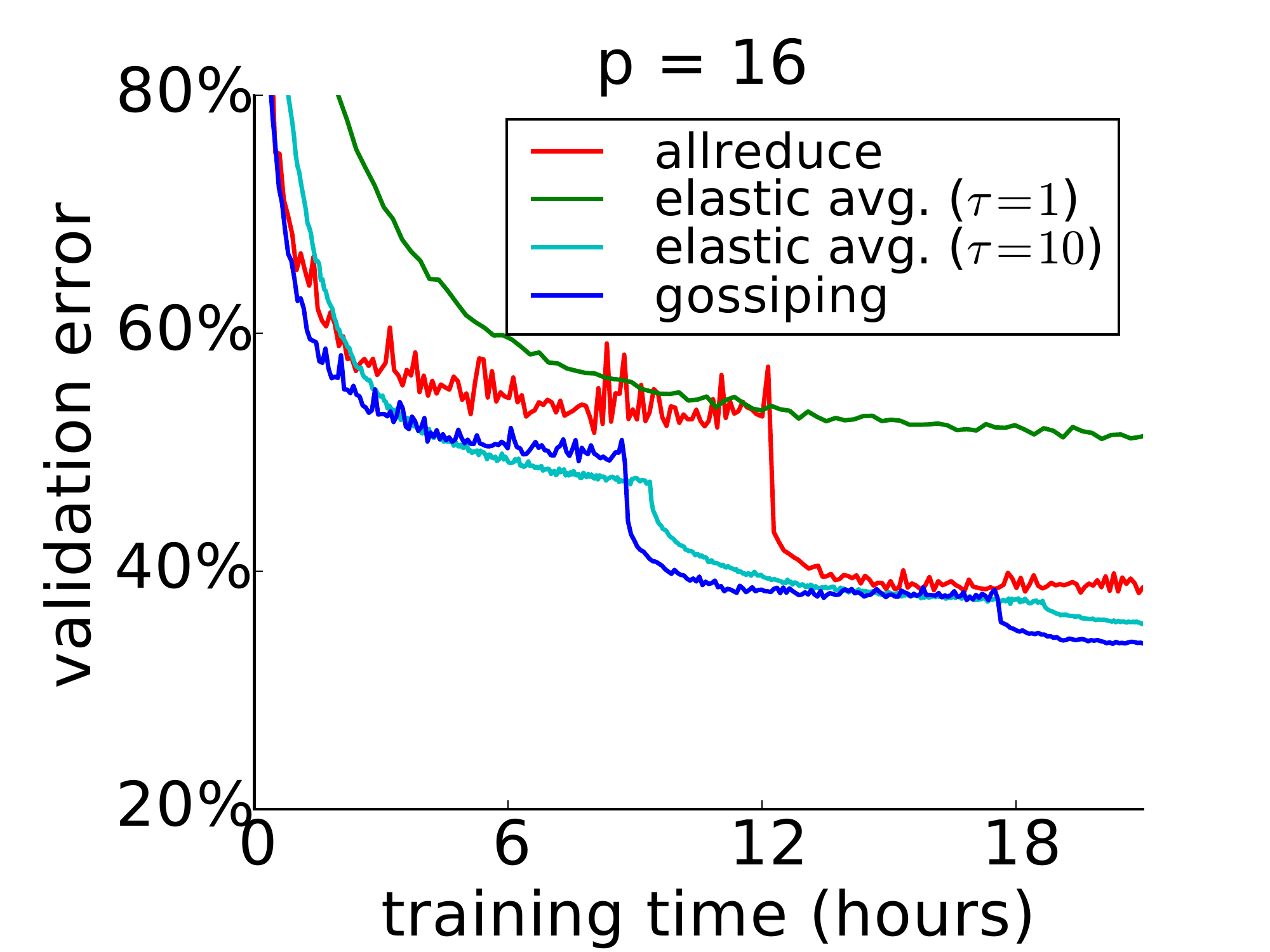}
  \end{subfigure}
  \\

  \begin{subfigure}[b]{0.32\textwidth}
    \includegraphics[width=\textwidth]{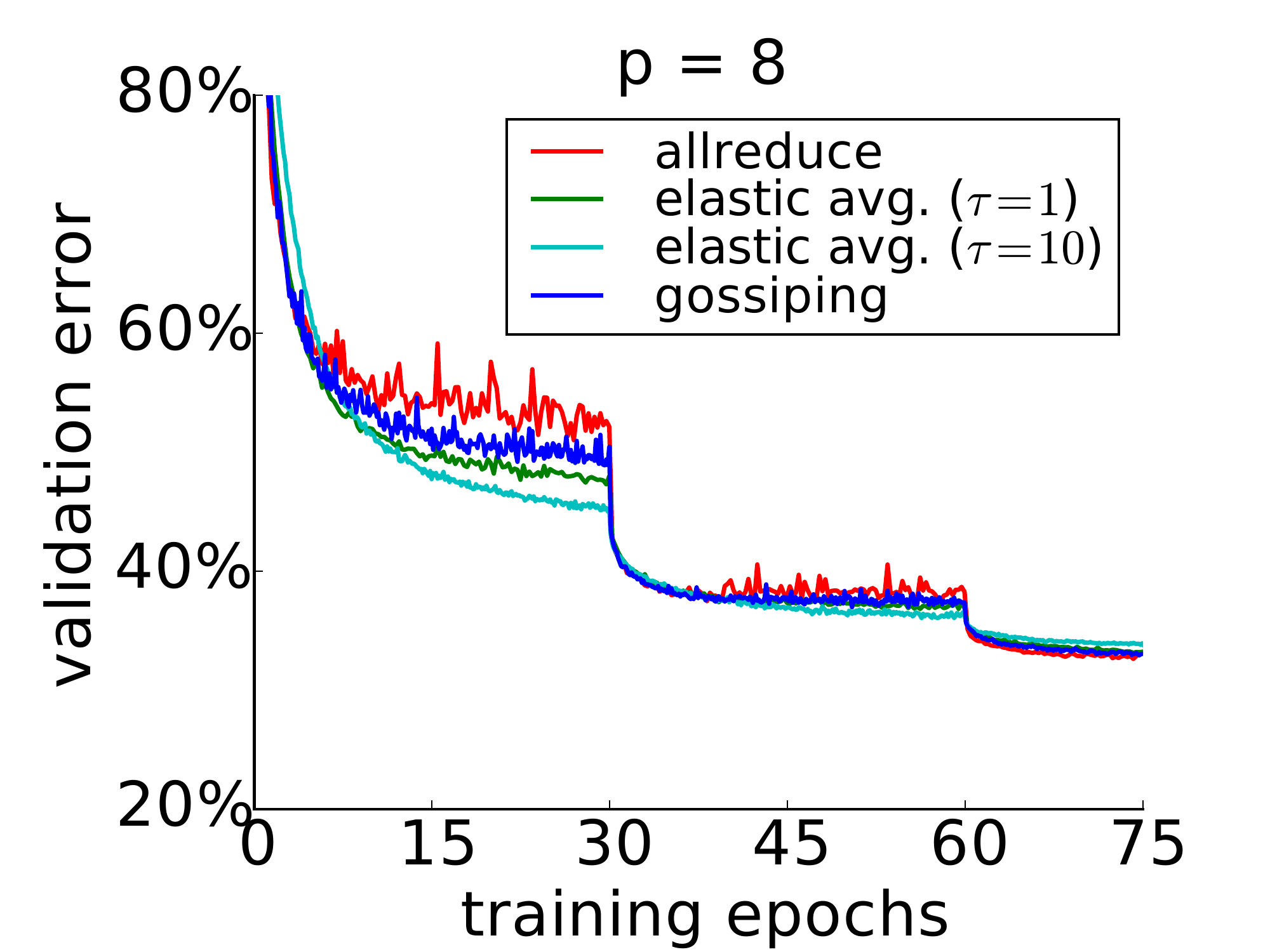}
  \end{subfigure}
  \begin{subfigure}[b]{0.32\textwidth}
    \includegraphics[width=\textwidth]{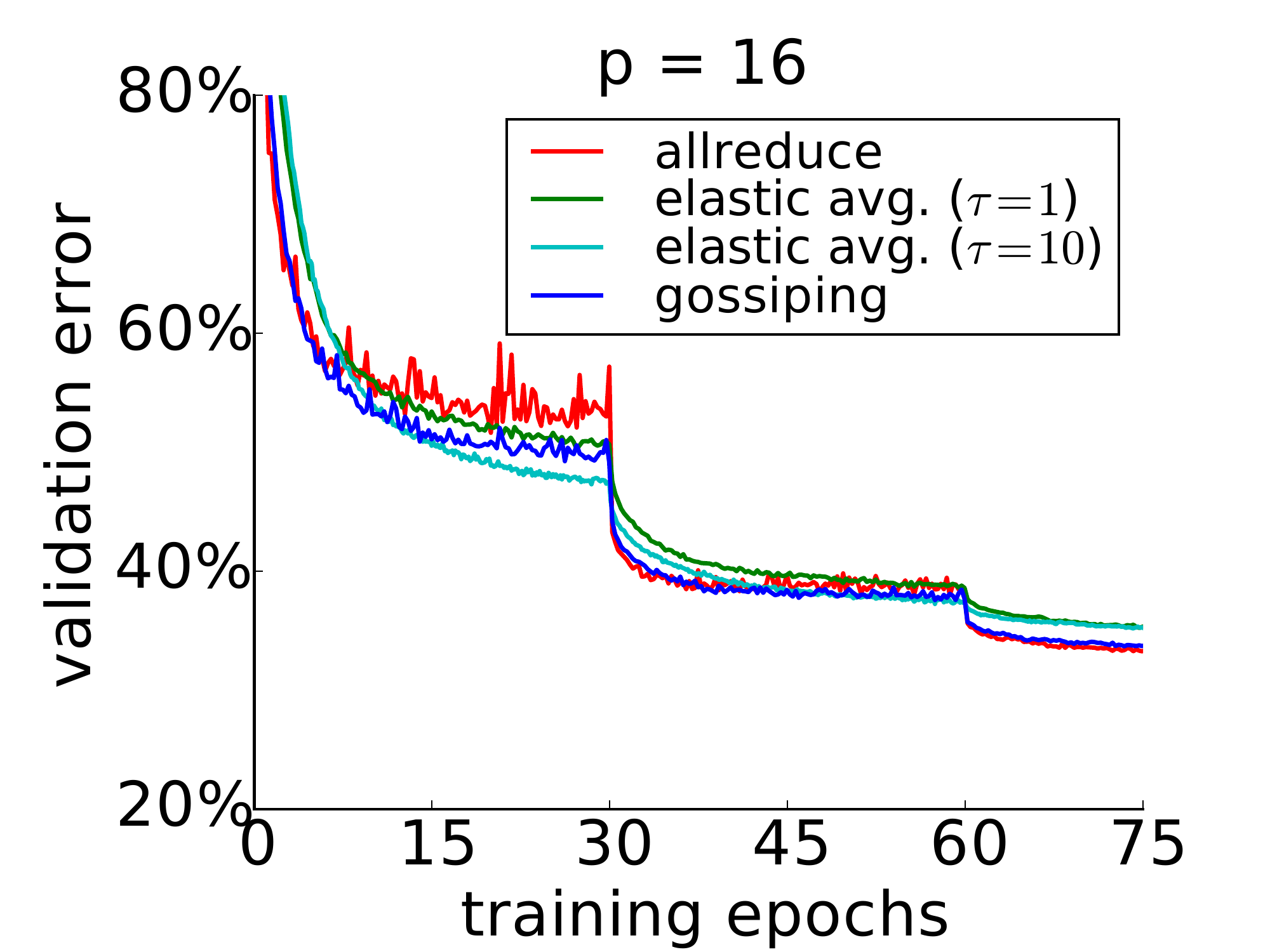}
  \end{subfigure}

  \caption{
    Center-crop validation loss and top-1 error on ImageNet
    over training wall-clock time and epochs.
    Shown are:
    (left) $p=8$ nodes with \emph{per-node} minibatch size $b=32$,
    and (right) $p=16$ nodes with \emph{per-node} minibatch size $b=16$,
  }
  \label{fig:valid_p8_p16}
\end{figure}

\begin{figure}[!ht]
  \centering


  \begin{subfigure}[b]{0.32\textwidth}
    \includegraphics[width=\textwidth]{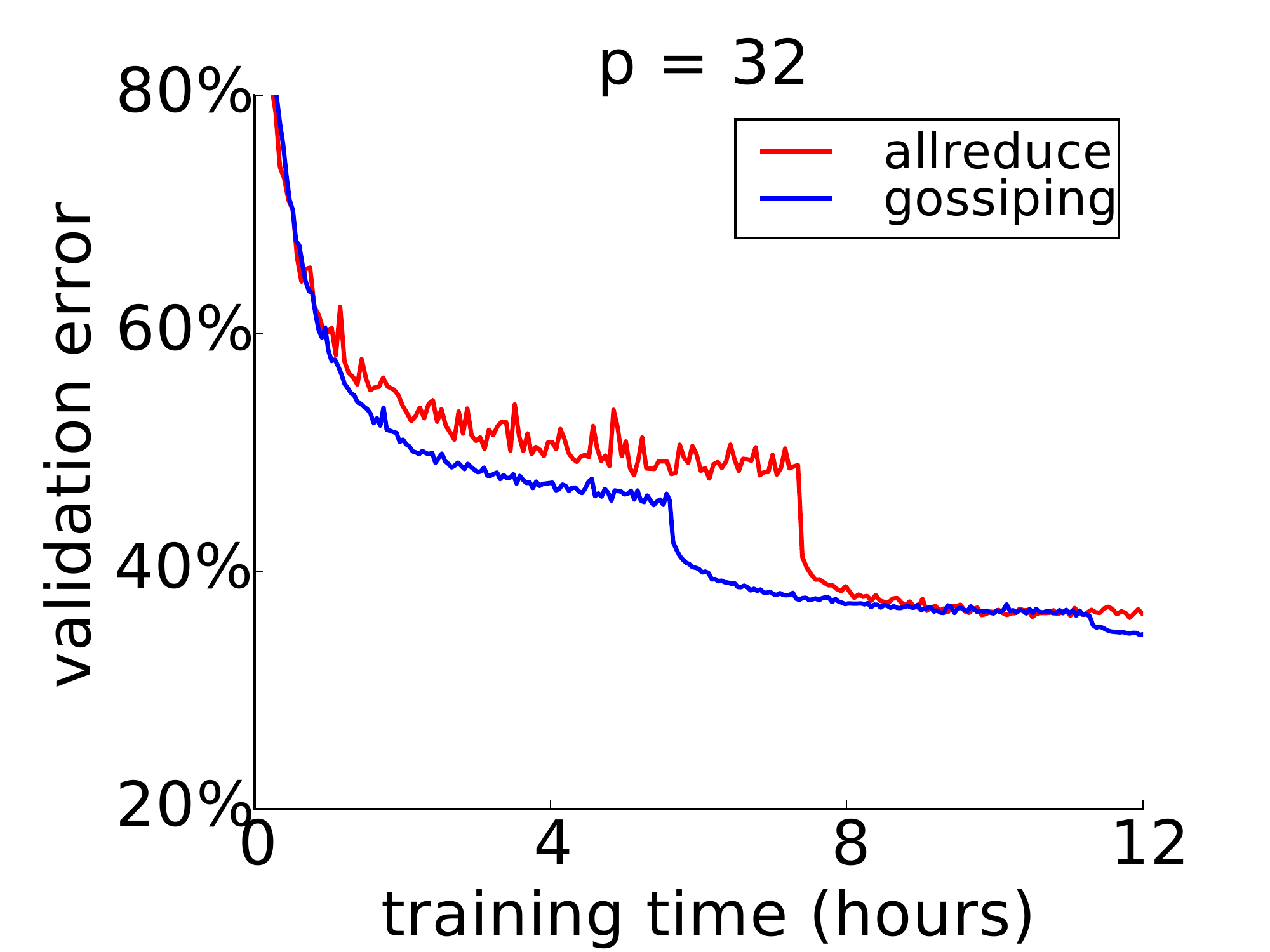}
  \end{subfigure}
  \begin{subfigure}[b]{0.32\textwidth}
    \includegraphics[width=\textwidth]{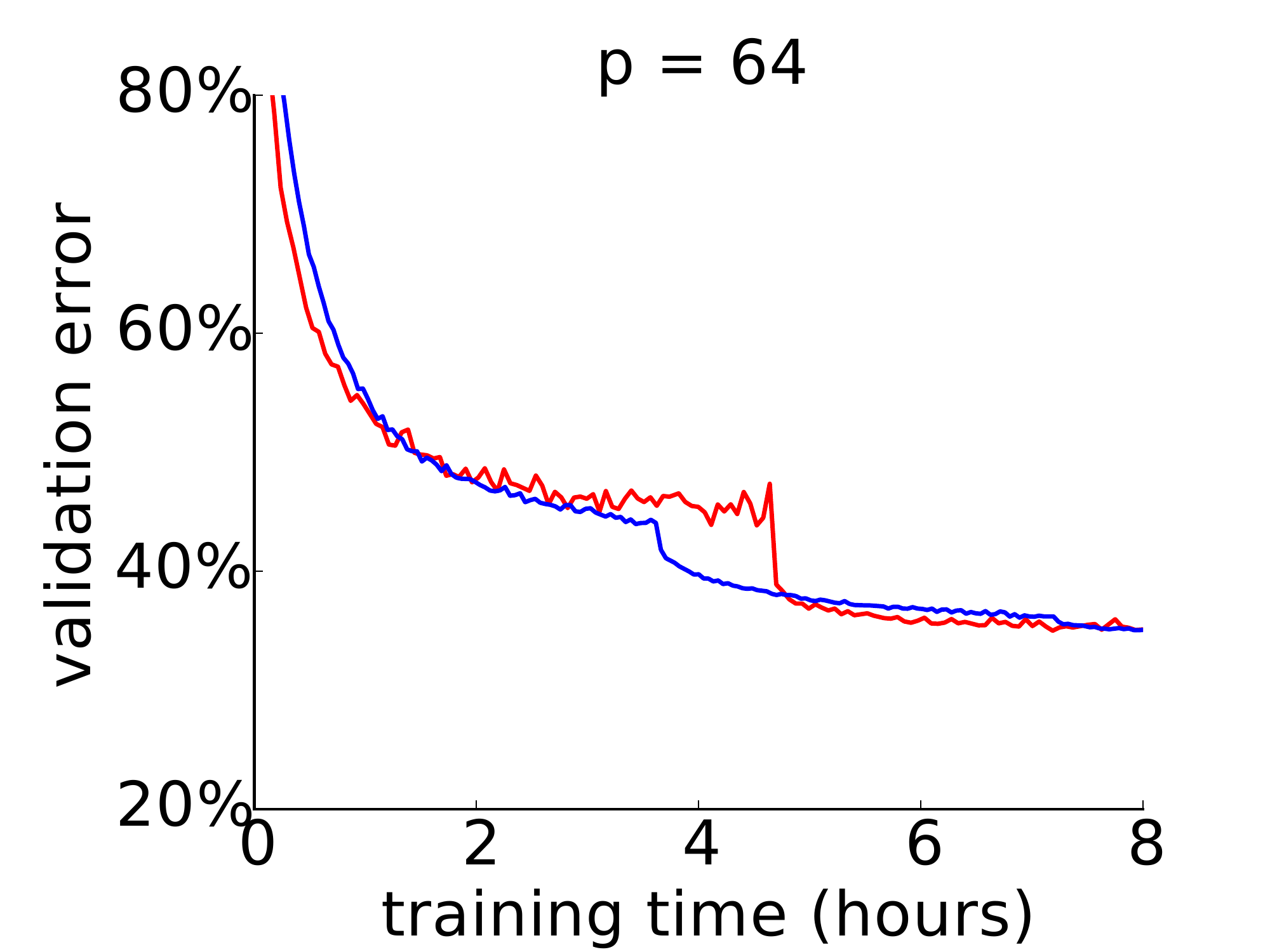}
  \end{subfigure}
  \begin{subfigure}[b]{0.32\textwidth}
    \includegraphics[width=\textwidth]{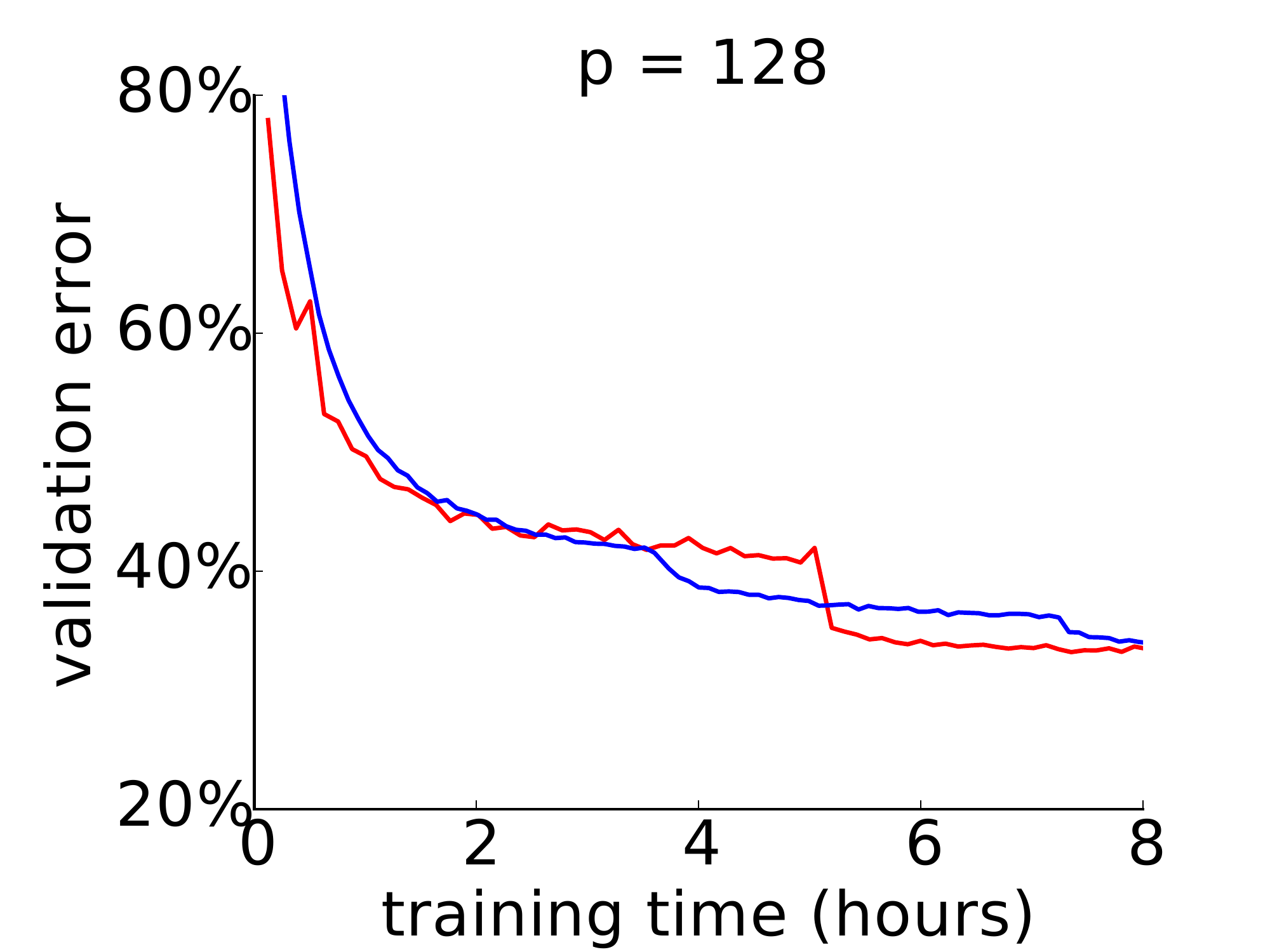}
  \end{subfigure}
  \\

  \begin{subfigure}[b]{0.32\textwidth}
    \includegraphics[width=\textwidth]{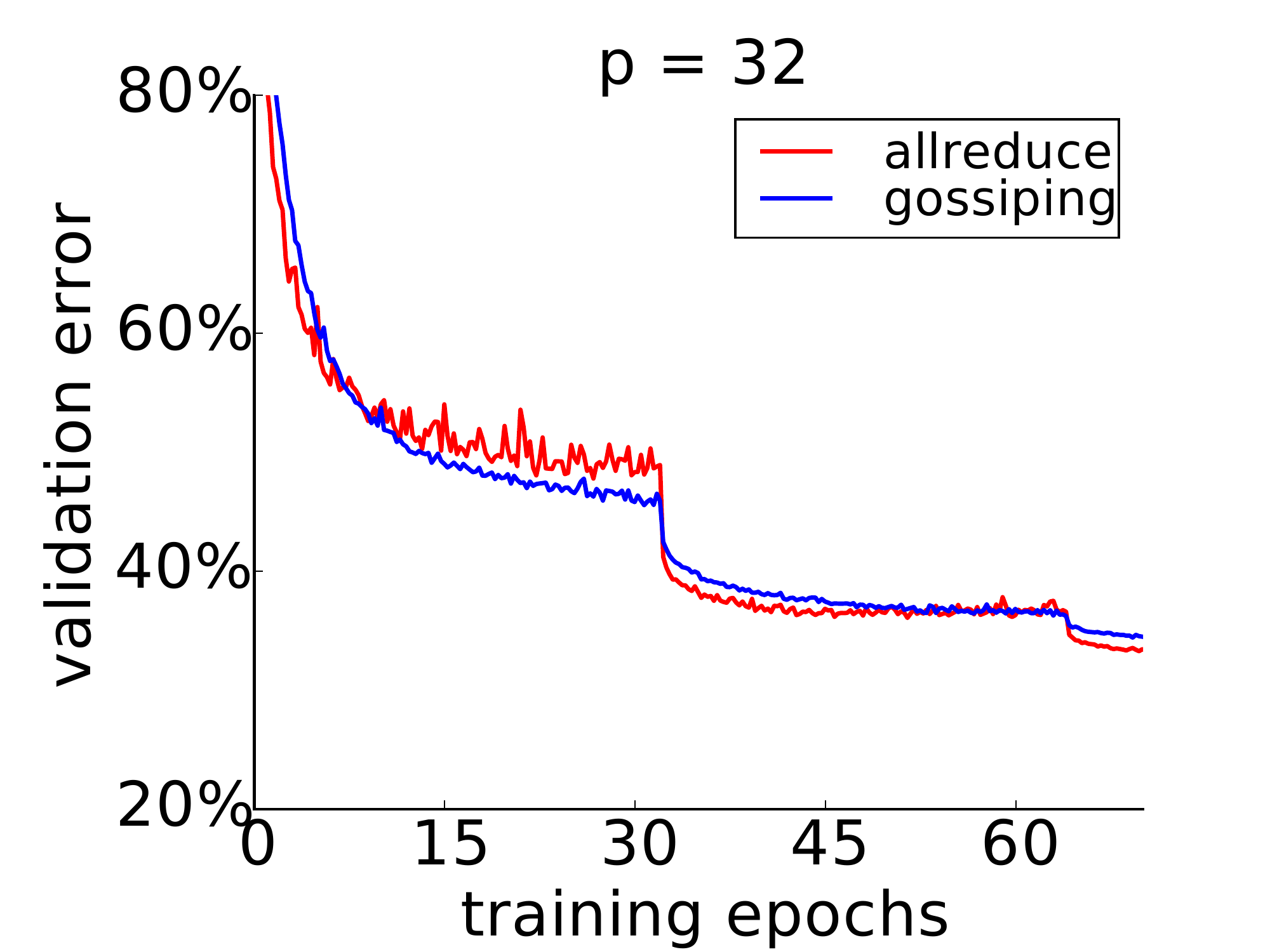}
  \end{subfigure}
  \begin{subfigure}[b]{0.32\textwidth}
    \includegraphics[width=\textwidth]{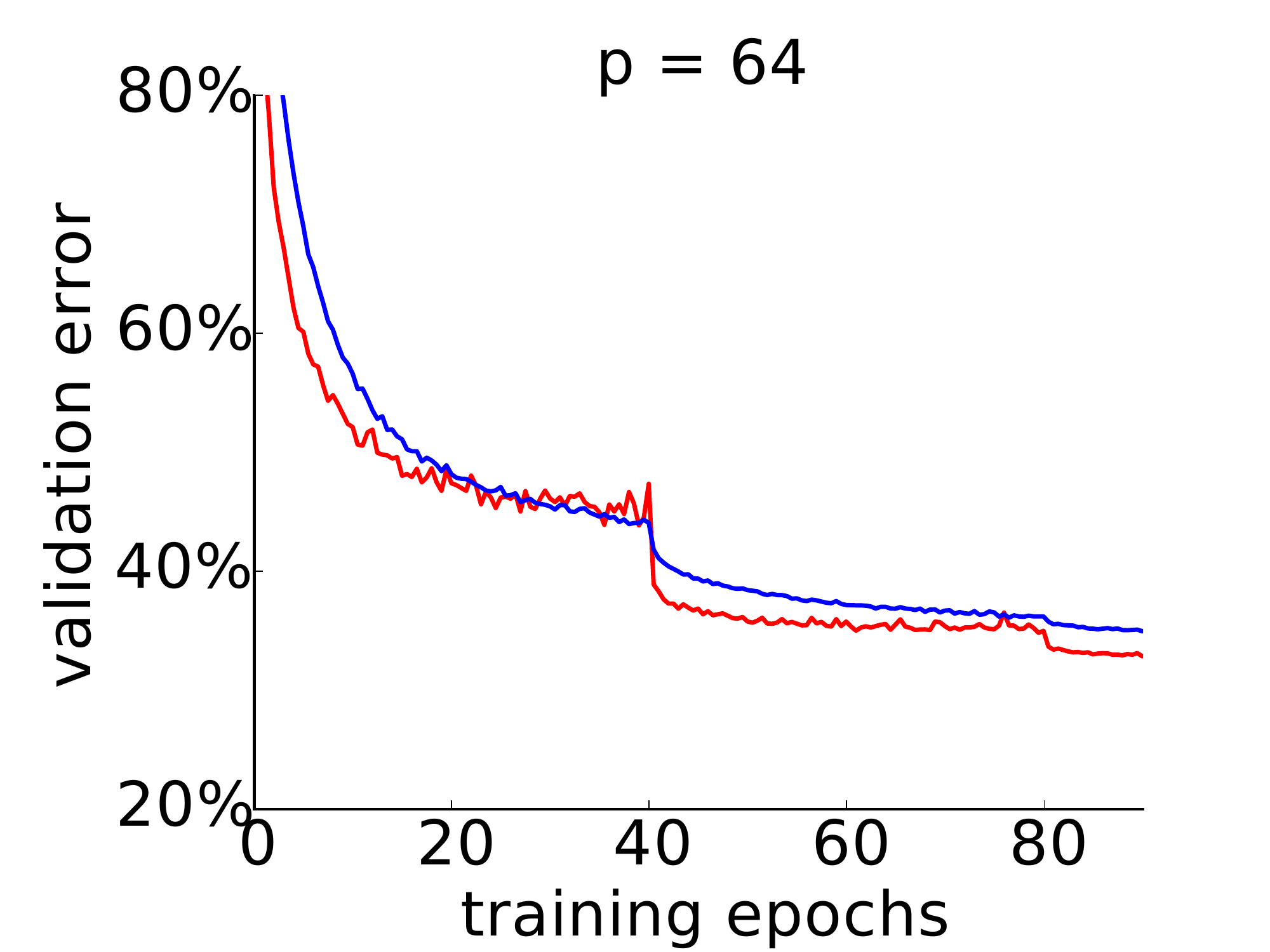}
  \end{subfigure}
  \begin{subfigure}[b]{0.32\textwidth}
    \includegraphics[width=\textwidth]{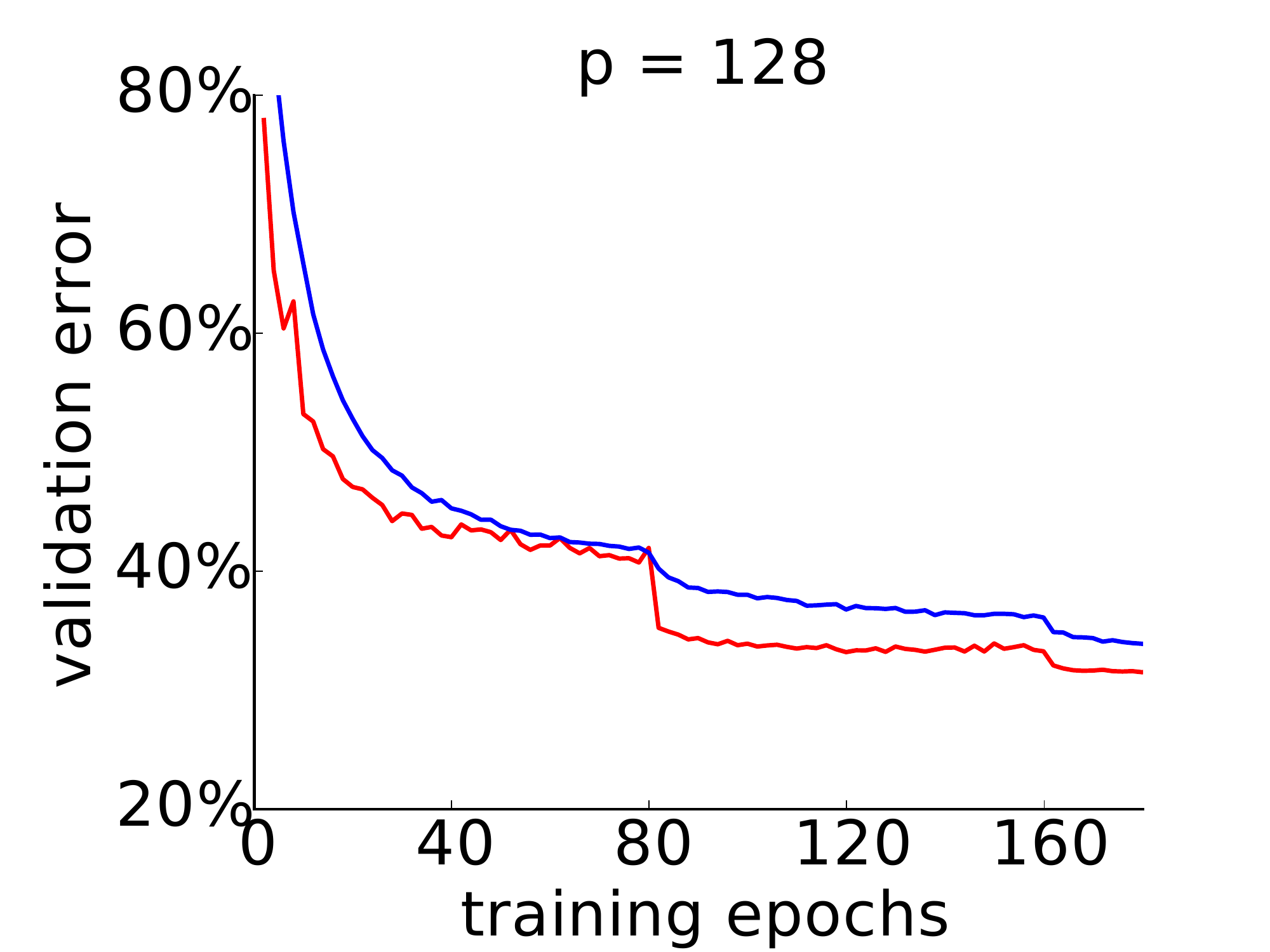}
  \end{subfigure}

  \caption{
    Center-crop validation loss and top-1 error on ImageNet
    over training wall-clock time and epochs,
    with different numbers of nodes $p$
    and \emph{per-node} minibatch size $b=16$.
    Shown are:
    (left) $p=32$ nodes,
    (middle) $p=64$ nodes,
    and (right) $p=128$ nodes.
  }
  \label{fig:valid_p32_p64}
\end{figure}

\subsection{Implementation}

We implement the communication systems of gossiping SGD and other algorithms
using Message Passing Interface (MPI) \citep{mpi}.
Because we wanted to run our code in cluster computing environments with
Infiniband or more specialized interconnects,
then targeting MPI was the easiest solution.
We targeted our code to run on GPUs, using the Nvidia CUDA 7.0 driver and
using the cuBLAS and cuDNNv4 \citep{cudnn} libraries for the core computational
kernels.

For our experiments up to $p=16$ nodes,
we use a local cluster of 16 machines, each one consisting of
an Nvidia Kepler K80 dual GPU,
an 8-core Intel Haswell E5-1680v2 CPU,
and a Mellanox ConnectX-3 FDR $4\times$ Infiniband (56 Gb/s) NIC.
We utilize only one GPU per K80.

For our larger scale experiments up to $p=128$ nodes,
we used a GPU supercomputer with over 10,000 total nodes.
Nodes consist of
an Nvidia Kepler K20X GPU
and an 8-core AMD Bulldozer Opteron 6274 CPU,
and are connected by a Cray Gemini interconnect in a 3D torus
configuration.




\subsection{Methodology}

We chose ResNets \citep{resnet} for our neural network architecture;
specifically, we trained ResNet-18, which is small enough to train rapidly for
experimentation, but also possesses features relevant to modern networks,
including depth, residual layers, and batch normalization \citep{batch_norm}.
We ran on the image classification problem of ImageNet
consisting of 1.28 million training images and 50,000 validation images
divided into 1000 classes \citep{ilsvrc}.
Our data augmentation is as follows:
we performed multi-scale training by scaling the shortest dimension of images to
between $256$ and $480$ pixels \cite{vgg},
we took random $224\times224$ crops and horizontal flips,
and we added pixelwise color noise \cite{krizhevsky}.
We evaluate validation loss and top-1 error on center crops of the validation
set images with the shortest dimension scaled to $256$ pixels.

Unless otherwise noted, we initialized the learning rate to $\alpha=0.1$,
then we annealed it twice by a factor of $0.1$.
For our experiments with aggregate minibatch size $m=pb=256$, we annealed at
exactly 150k and 300k iterations into training.
For our experiments with larger aggregate minibatch sizes, we decreased the
number of iterations at which the step size was annealed.
We used Nesterov momentum of $\mu=0.9$ and weight decay of $\lambda=10^{-4}$.
For elastic averaging, we set $\beta=0.8/p$.
For all-reduce and gossiping, we used a communication interval of $\tau=1$,
i.e.~communication occurred every iteration.
For gossiping, we used both $\tau=1$ and $\tau=10$ (the latter is recommended
in \cite{easgd}).

\subsection{Results}


Our first set of experiments compare
all-reduce, elastic averaging, and push-gossiping at $p=8$ and $p=16$
with an aggregate minibatch size $m=pb=256$.
The results are in Figure \ref{fig:valid_p8_p16}.

For $p=8$,
elastic averaging with a communication delay $\tau=10$ performs ahead of the
other methods,
Interestingly, all-reduce has practically no synchronization overhead on
the system at $p=8$ and is as fast as gossiping.
All methods converge to roughly the same minimum loss value.

For $p=16$,
gossiping converges faster than elastic averaging with $\tau=10$, and both
come ahead of all-reduce.
Additionally, elastic averaging with both $\tau=1$ and $\tau=10$ has trouble
converging to the same validation loss as the other methods
once the step size has been annealed to a small value ($\alpha=0.001$
in this case).

We also perform larger scale experiments at
$p=32$ nodes, $p=64$ nodes, and $p=128$ nodes
in the GPGPU supercomputing environment.
In this environment,
elastic averaging did not perform well so we do not show those results here;
pull-gossiping also performed better than push-gossiping, so we only show
results for pull-gossiping.
The results are in Figure \ref{fig:valid_p32_p64}.
At this scale, we begin to see the scaling advantage of synchronous all-reduce
SGD.
One iteration of gossiping SGD is still faster than one iteration of
all-reduce SGD,
and gossiping works quickly at the initial step size.
But gossiping SGD begins to converge much slower after the step size
has annealed.

We note that the training time of SGD can be thought of as the product
$(\text{wall-clock time per iteration}) \times (\text{number of iterations})$.
One observation we made consistent with \cite{revisiting_sync} was the
following:
letting synchronous all-reduce SGD run for many epochs, it will typically
converge to a lower optimal validation loss (or higher validation accuracy)
than either elastic averaging or gossiping SGD.
We found that letting all-reduce SGD run for over 1 million
iterations with a minibatch size of 256 led to a peak top-1 validation accuracy
of $68.7\%$.
However, elastic averaging often had trouble breaking $67\%$,
as did gossiping when the number of nodes was greater than $p=32$.
In other words, at larger scales the asynchronous methods require more
iterations to convergence despite lower wall-clock time per iteration.

\section{Discussion}

Revisiting the questions we asked in the beginning:
{
\renewcommand{\theenumi}{\alph{enumi}}
\begin{enumerate}
  \item
    \emph{How fast do asynchronous and synchronous SGD algorithms converge
    at both the beginning of training (large step sizes)
    and at the end of training (large step sizes)?}

    Up to around 32 nodes, asynchronous SGD can converge faster
    than all-reduce SGD when the step size is large.
    When the step size is small (roughly $0.001$ or less), gossiping can
    converge faster than elastic averaging, but all-reduce SGD converges
    most consistently.



  \item
    \emph{How does the convergence behavior of asynchronous and synchronous
    distributed SGD vary with the number of nodes?}

    Both elastic averaging and gossiping seem to converge faster than
    synchronous all-reduce SGD with fewer nodes (up to 16--32 nodes).
    With more nodes (up to a scale of 100 nodes),
    all-reduce SGD can consistently converge to a high-accuracy solution,
    whereas asynchronous methods seem to plateau at lower accuracy.
    In particular, the fact that gossiping SGD does not scale as well as does
    synchronous SGD with more nodes suggests that the asynchrony and the
    pattern of communication,
    rather than the amount of communication (both methods have low amounts of
    communication), are responsible for the difference in convergence.
\end{enumerate}
}

In this work, we focused on comparing the scaling of synchronous and
asynchronous SGD methods on a supervised learning problem on two platforms:
a local GPU cluster and a GPU supercomputer.
However, there are other platforms that are relevant for researchers,
depending on what resources they have available.
These other platforms include multicore CPUs, multi-GPU single servers,
local CPU clusters, and cloud CPU/GPU instances,
and we would expect to observe somewhat different results compared to the
platforms tested in this work.

While our experiments were all in the setting of supervised learning
(ImageNet image classification),
the comparison between synchronous and asynchronous parallelization of
learning may differ in other settings;
c.f.~recent results on asynchronous methods in deep reinforcement learning
\cite{async_rl}.
Additionally, we specifically used \emph{convolutional} neural networks
in our supervised learning experiments,
which because of their high arithmetic intensity (high ratio of floating point
operations to memory footprint)
have a different profile from networks with many fully connected operations,
including most recurrent networks.

Finally, we exclusively looked at SGD with Nesterov momentum as the underlying
algorithm to be parallelized/distributed.
Adaptive versions of SGD such as RMSProp \cite{rmsprop} and Adam \citep{adam}
are also widely used in deep learning, and their corresponding distributed
versions may have additional considerations (e.g.~sharing the squared gradients
in \cite{async_rl}).

\section*{Acknowledgments}

We thank Kostadin Ilov, Ed F.~D'Azevedo, and Chris Fuson for helping
make this work possible,
as well as Josh Tobin for insightful discussions.
We would also like to thank the anonymous reviewers for their constructive
feedback.
This research is partially funded by DARPA Award Number HR0011-12-2-0016
and by ASPIRE Lab industrial sponsors and affiliates
Intel, Google, Hewlett-Packard, Huawei, LGE, NVIDIA, Oracle, and Samsung.
This research used resources of the Oak Ridge Leadership Computing Facility at
the Oak Ridge National Laboratory, which is supported by the Office of Science
of the U.S. Department of Energy under Contract No. DE-AC05-00OR22725.

\section*{References}

\bibliographystyle{unsrtnat}
{\small\bibliography{draft}}

\newpage
\section{Appendix}

\subsection{Analysis}

The analysis below loosely follow the arguments presented in, and use a combination of techniques appearing from [17,16,10,18,9]. \\

For ease of exposition and notation, we focus our attention on the case of univariate strongly convex function $f$ with Lipschitz gradients. (Since the sum of squares errors are additive in vector components, the arguments below generalize to the case of multivariate functions.) We assume that the gradients of the function $f$ can be sampled by all processors independently up to additive noise with zero mean and bounded variance. (Gaussian noise satisfy these assumptions, for example.) We also assume a fully connected network where all processors are able to communicate with all other processors. Additional assumptions are introduced below as needed. \\

We denote by $\theta_t = \begin{bmatrix} \theta_{t,1}, \cdots, \theta_{t,p} \end{bmatrix}^T$ the vector containing all local parameter values at time step $t$, we denote by $\theta_\ast$ the optimal value of the objective function $f$, and we denote by $\bar{\theta}_t \equiv \tfrac{1}{p} \sum_{i=1}^p \theta_{t,i}$ the spatial average of the local parameters taken at time step $t$. \\

We derive bounds for the following two quantities:

\begin{itemize}
	\item $\mathbb{E}[\Vert \theta_t - \theta_\ast \mathbf{1} \Vert^2]$, the squared sum of the local parameters' deviation from the optimum $\theta_\ast$ \\
	\item $\mathbb{E}[\Vert \theta_t - \bar{\theta}_t \mathbf{1} \Vert^2]$, the squared sum of the local parameters' deviation from the mean $\bar{\theta}_t = \frac{1}{p} \sum_{i=1}^p \theta_{t,i}$
\end{itemize}
where the expectation is taken with respect to both the ``pull'' parameter choice and the gradient noise term. In the literature [18], the latter is usually referred to as ``agent agreement'' or ``agent consensus''.  

\subsubsection{Synchronous pull-gossip algorithm}

We begin by analyzing the synchronous version of the pull-gossip algorithm described in Algorithm 3. For each processor $i$, let $j_i$ denote the processor, chosen uniformly randomly from $\{1, \cdots, p\}$ from which processor $i$ ``pulls'' parameter values. The update for each $\theta_{t,i}$ is given by

\begin{align}
\theta_{t+1,i} = \frac{1}{2} \left( \theta_{t,i} + \theta_{t, j_i} \right) - \alpha_{t} \left( \nabla f \left( \frac{1}{2} \left( \theta_{t,i} + \theta_{t, j_i} \right) ; X_{t,i} \right) + \xi_{t, i} \right)
\end{align} 

We prove the following rate of convergence for the synchronous pull-gossip algorithm. 

\begin{theorem}
Let $f$ be a $m$-strongly convex function with $L$-Lipschitz gradients. Assume that we can sample gradients $g = \nabla f(\theta; X_i) + \xi_i$ with additive noise with zero mean $\mathbb{E}[\xi_i] = 0$ and bounded variance $\mathbb{E}[\xi_i^T \xi_i] \leq \sigma^2$. Then, running the synchronous pull-gossip algorithm as outlined above with step size $0 < \alpha \leq \tfrac{2}{m+L}$, the expected sum of squares convergence of the local parameters to the optimal $\theta_\ast$ is bounded by 

\begin{align}
\mathbb{E}[ \Vert \theta_t - \theta_\ast \mathbf{1} \Vert^2 ] &\leq \left( 1 - 2 \alpha \frac{mL}{m+L} \right)^t \Vert \theta_0 - \theta_\ast \mathbf{1} \Vert^2 + p \alpha \sigma^2\frac{m+L}{2mL}
\end{align}
\end{theorem}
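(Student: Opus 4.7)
The plan is to view one step of the synchronous pull-gossip update as a composition of two operators on the stacked vector $\theta_t \in \mathbb{R}^p$: an averaging operator $W_t$, defined by $(W_t \theta)_i = \tfrac{1}{2}(\theta_i + \theta_{j_i})$ with $j_i$ i.i.d.~uniform on $\{1,\ldots,p\}$, followed by a componentwise noisy gradient step with step size $\alpha$. The target vector $\theta_\ast \mathbf{1}$ is a fixed point of both operators: of $W_t$ because $W_t \mathbf{1} = \mathbf{1}$, and of the gradient step because $\nabla f(\theta_\ast) = 0$. This lets me measure progress by the single Lyapunov quantity $\|\theta_t - \theta_\ast \mathbf{1}\|^2$ and derive a one-step contraction.

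The first key step is to expand
\begin{align*}
\|\theta_{t+1} - \theta_\ast \mathbf{1}\|^2 = \|W_t \theta_t - \alpha \nabla F(W_t \theta_t) - \theta_\ast \mathbf{1}\|^2 - 2\alpha \langle \xi_t, W_t \theta_t - \alpha \nabla F(W_t \theta_t) - \theta_\ast \mathbf{1}\rangle + \alpha^2 \|\xi_t\|^2,
\end{align*}
where $\nabla F$ denotes the componentwise gradient. Taking expectation over the gradient noise kills the cross term since $\mathbb{E}[\xi_t] = 0$ and $\xi_t$ is independent of $W_t \theta_t$, and bounds the last term by $\alpha^2 p \sigma^2$. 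The second key step is the classical Nesterov contraction: for $m$-strongly convex $f$ with $L$-Lipschitz gradient and $0 < \alpha \le 2/(m+L)$, one has $(\phi - \alpha \nabla f(\phi) - \theta_\ast)^2 \le (1 - 2\alpha \frac{mL}{m+L})(\phi - \theta_\ast)^2$ for every scalar $\phi$. Applying this componentwise to $\phi = (W_t \theta_t)_i$ and summing gives $\|W_t \theta_t - \alpha \nabla F(W_t \theta_t) - \theta_\ast \mathbf{1}\|^2 \le (1 - 2\alpha \frac{mL}{m+L}) \|W_t \theta_t - \theta_\ast \mathbf{1}\|^2$.

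The third key step is showing that averaging is nonexpansive in expectation, $\mathbb{E}[\|W_t v\|^2] \le \|v\|^2$ for all $v$, applied to $v = \theta_t - \theta_\ast \mathbf{1}$ (legitimate because $W_t \mathbf{1} = \mathbf{1}$). A direct computation gives $\mathbb{E}[\|W_t v\|^2] = \tfrac{1}{2}\|v\|^2 + \tfrac{p}{2}\bar{v}^2$, and Cauchy–Schwarz yields $\bar{v}^2 \le \|v\|^2/p$, so the bound follows. Combining the three steps produces the one-step recursion
\begin{align*}
\mathbb{E}[\|\theta_{t+1} - \theta_\ast \mathbf{1}\|^2] \le \Big(1 - 2\alpha \tfrac{mL}{m+L}\Big)\, \mathbb{E}[\|\theta_t - \theta_\ast \mathbf{1}\|^2] + \alpha^2 p \sigma^2,
\end{align*}
which I then iterate and close via the geometric series $\sum_{k=0}^{\infty}(1-c)^k = 1/c$ with $c = 2\alpha mL/(m+L)$, producing the advertised noise floor $\alpha p \sigma^2 (m+L)/(2mL)$.

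The step I expect to be most delicate is combining the averaging contraction with the gradient contraction in the right order. In particular, the contraction factor $(1 - 2\alpha mL/(m+L))$ must be applied to $\|W_t \theta_t - \theta_\ast \mathbf{1}\|^2$ before taking expectation over the choice of neighbors, and only then can I invoke $\mathbb{E}[\|W_t v\|^2] \le \|v\|^2$. Because the random matrix $W_t$ is independent of the gradient noise $\xi_t$, the two expectations can be taken in either order, but I need to be careful not to apply Jensen's inequality backward and not to absorb helpful cross terms into the noise budget. Once this bookkeeping is in place, the remainder is a standard recursion unrolling.
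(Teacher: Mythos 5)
Your proposal is correct and follows essentially the same route as the paper's proof: the same decomposition of one step into the random averaging matrix $M_I$ (your $W_t$) followed by a gradient step at the averaged point, the same Nesterov coercivity/contraction argument with $0<\alpha\le \tfrac{2}{m+L}$, the same second-moment computation $\mathbb{E}[W_t^T W_t]=\tfrac12\bigl(I+\tfrac1p\mathbf{1}\mathbf{1}^T\bigr)$ yielding nonexpansiveness, and the same unrolling of the resulting recursion to the noise floor $p\alpha\sigma^2\tfrac{m+L}{2mL}$. The only differences are presentational (packaging the contraction as a pathwise componentwise lemma and using Cauchy--Schwarz in place of the eigenvalue bound), so nothing further is needed.
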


\begin{remark}
Note that this bound is characteristic of SGD bounds, where the iterates converge to within some ball around the optimal solution. It can be shown by induction that a decreasing step size schedule of $\alpha_t \sim \mathcal{O}(1/pt)$ can be used to achieve a convergence rate of $\mathcal{O}(1/t)$.
\end{remark}

\begin{proof}
For notational simplicity, we denote $\theta_{t, i, j_i} \equiv \frac{1}{2} \left( \theta_{t,i} + \theta_{t, j_i} \right)$, and drop the $X_{t,i}$ in the gradient term. We tackle the first quantity by conditioning on the previous parameter values and expanding as  
\begin{align}
\mathbb{E}[ \Vert \theta_{t+1} - \theta_\ast \mathbf{1} \Vert^2 \, \vert \, \theta_{t} ] &= \mathbb{E}\left[ \sum_{i=1}^p \left( \theta_{t,i,j_i} - \theta_\ast \right)^T \left( \theta_{t,i,j_i} - \theta_\ast \right) \, \Big\vert \, \theta_{t} \right] \label{eq:firstterm} \\
&- 2 \alpha_{t} \mathbb{E}\left[ \sum_{i=1}^p \nabla f(\theta_{t, i, j_i})^T (\theta_{t, i, j_i} - \theta_\ast) \, \Big\vert \, \theta_{t} \right] \label{eq:term2} \\
&- 2 \alpha_{t} \mathbb{E}\left[ \sum_{i=1}^p \xi_{t,i}^T \left( \theta_{t,i,j_i} - \theta_\ast \right) \, \Big\vert \, \theta_{t} \right]  \\
&+ \alpha_{t}^2 \left[ \sum_{i=1}^p \left( \nabla f(\theta_{t, i, j_i}) + \xi_{t, i} \right)^T \left( \nabla f(\theta_{t, i, j_i}) + \xi_{t, i} \right) \, \Big\vert \, \theta_{t} \right] \label{eq:lastterm}
\end{align}

Recalling that strongly convex functions satisfy [17], $\forall x, z$,
\begin{align} 
& (\nabla f(x) - \nabla f(z))^T (x - z) \\
\nonumber & \geq \frac{mL}{m+L} (x - z)^T(x - z) + \frac{1}{m+L} (\nabla f(x) - \nabla f(z))^T (\nabla f(x) - \nabla f(z)) && \label{eq:stronglyconvexssumption}
\end{align}
we can use this inequality, with $x = \theta_{t, i, j_i}$ and $z = \theta_\ast$ to bound the term in \eqref{eq:term2}:
\begin{align}
& -2\alpha_{t} \mathbb{E} \left[ \sum_{i=1}^p \nabla f(\theta_{t, i, j_i})^T (\theta_{t, i, j_i} - \theta_\ast) \, \Big\vert \, \theta_{t} \right] \\
\nonumber & \leq - \frac{2 \alpha_{t} mL}{m+L} \mathbb{E}\left[ \sum_{i=1}^p (\theta_{t,i,j_i} - \theta_\ast)^T (\theta_{t,i,j_i} - \theta_\ast) \, \Big\vert \, \theta_{t} \right] - \frac{2 \alpha_{t} }{m+L} \mathbb{E}\left[ \sum_{i=1}^p \nabla f(\theta_{t,i,j_i})^T \nabla f(\theta_{t, i, j_i}) \, \Big\vert \, \theta_{t} \label{eq:stronglyconvex} \right]
\end{align}

Using \eqref{eq:stronglyconvex}, and regrouping terms in \eqref{eq:firstterm}-\eqref{eq:lastterm}, we obtain 
\begin{align}
\mathbb{E}[ \Vert \theta_{t+1} - \theta_\ast \mathbf{1} \Vert^2 \, \vert \, \theta_{t} ] &\leq \left( 1 - 2 \alpha_{t} \frac{mL}{m+L} \right) \mathbb{E}\left[ \sum_{i=1}^p \left( \theta_{t,i,j_i} - \theta_\ast \right)^T \left( \theta_{t,i,j_i} - \theta_\ast \right) \, \Big\vert \, \theta_{t} \right] \\
&+ \left( \alpha_{t}^2 - 2 \alpha_{t} \frac{1}{m+L} \right) \mathbb{E}\left[ \sum_{i=1}^p \nabla f(\theta_{t, i, j_i})^T \nabla f(\theta_{t, i, j_i}) \, \Big\vert \, \theta_{t} \right] \label{eq:dropifalphasmall} \\
&+ \alpha^2_{t} \mathbb{E}\left[ \sum_{i=1}^p \xi_{t,i}^T \xi_{t,i} \, \Big\vert \, \theta_{t} \right] \label{eq:noiseterm}
\end{align}
In the above expression, we have dropped the terms linear in $\xi_{t,i}$, using the assumption that these noise terms vanish in expectation. In addition, if the step size parameter $\alpha$ is chosen sufficiently small, $0 < \alpha_{t} \leq \frac{2}{m+L}$, then the second term in \eqref{eq:dropifalphasmall} can also be dropped. The expression we must contend with is
\begin{align}
\mathbb{E}[ \Vert \theta_{t+1} - \theta_\ast \mathbf{1} \Vert^2 \, \vert \, \theta_{t} ] &\leq \left( 1 - 2 \alpha_{t} \frac{mL}{m+L} \right) \mathbb{E}\left[ \sum_{i=1}^p \left( \theta_{t,i,j_i} - \theta_\ast \right)^T \left( \theta_{t,i,j_i} - \theta_\ast \right) \, \Big\vert \, \theta_{t} \right] + \alpha^2_t p \sigma^2 \label{eq:recursionsync}
\end{align}

Using the definition of $\theta_{t,i,j_i}$, we can verify that the following matrix relation holds.
\begin{align}
\begin{bmatrix}
\theta_{t,1,j_1} - \theta_\ast \\ \vdots \\ \theta_{t,i,j_i} - \theta_\ast \\ \vdots \\ \theta_{t,p,j_p} - \theta_\ast
\end{bmatrix} &= \begin{bmatrix} \tfrac{1}{2} & & & \tfrac{1}{2} & \\ & \ddots & & & \\ & \tfrac{1}{2} & \tfrac{1}{2} & & & \\ & & & \ddots & \\ \tfrac{1}{2} & & & & \tfrac{1}{2} \end{bmatrix}
\begin{bmatrix} \theta_{t,1} - \theta_\ast \\ \vdots \\ \theta_{t,i} - \theta_\ast \\ \vdots \\ \theta_{t,p} - \theta_\ast \end{bmatrix} \equiv M_I (\theta_t - \theta_\ast \mathbf{1})
\end{align}
where the random matrix $M_I$ depends on the random index set $I = \{j_i\}_{i=1}^p$ of uniformly randomly drawn indices. $M_I$ has two entries in each row, one on the diagonal, and one appearing in the $j_i$th column, and is a right stochastic matrix but need not be doubly stochastic. \\

We can express this matrix as
\begin{align*}
M_I &= \frac{1}{2} \sum_{i=1}^p e_i (e_i + e_{j_i})^T
\end{align*}
and compute its second moment
\begin{align}
\mathbb{E}[M_I^T M_I] &= \frac{1}{4} \, \mathbb{E}\left[ \left( I + \sum_{i=1}^p e_i e_{j_i}^T \right)^T \left( I + \sum_{i=1}^p e_i e_{j_i}^T \right) \right] \\
&= \frac{1}{2} \left( I + \frac{1}{p} \mathbf{1} \mathbf{1}^T \right) \\
&= Q \begin{bmatrix} 1 & & & \\ & \tfrac{1}{2} & & \\ & & \ddots & \\ & & & \tfrac{1}{2} \end{bmatrix} Q^T \label{eq:eigendecomp}
\end{align}
where in the last line, the orthogonal diagonalization reveals that the eigenvalues of this matrix are bounded by $\frac{1}{2} \leq \lambda_i \leq 1$. \\

Using \eqref{eq:eigendecomp}, we can further simply \eqref{eq:recursionsync} to
\begin{align}
\mathbb{E}[ \Vert \theta_{t+1} - \theta_\ast \mathbf{1} \Vert^2 \, \vert \, \theta_{t} ] &\leq \left( 1 - 2 \alpha_{t} \frac{mL}{m+L} \right) (\theta_t - \theta_\ast \mathbf{1})^T \mathbb{E}\left[ M_I^T M_I \right] (\theta_t - \theta_\ast \mathbf{1}) + \alpha^2_t p \sigma^2 \\
&\leq \left( 1 - 2 \alpha_{t} \frac{mL}{m+L} \right) \Vert \theta_t - \theta_\ast \mathbf{1} \Vert^2 + \alpha^2_t p \sigma^2
\end{align}
Assuming a constant step size $\alpha_t \equiv \alpha$, the above recursion can be unrolled to derive the bound
\begin{align}
\mathbb{E}[ \Vert \theta_t - \theta_\ast \mathbf{1} \Vert^2 ] &\leq \left( 1 - 2 \alpha \frac{mL}{m+L} \right)^t \Vert \theta_0 - \theta_\ast \mathbf{1} \Vert^2 + p \alpha \sigma^2\frac{m+L}{2mL}
\end{align}
We note that the above bound is characteristic of SGD bounds, where the iterates converge to a ball around the optimal solution, whose radius now depends on the number of processors $p$, in addition to the step size $\alpha$ and the variance of the gradient noise $\sigma^2$. \\

\end{proof}

\subsubsection{Asynchronous pull-gossip algorithm}
We provide similar analysis for the asynchronous version of the pull-gossip algorithm. As is frequently done in the literature, we model the time steps as the ticking of local clocks governed by Poisson processes. More precisely, we assume that each processor has a clock which ticks with a rate $1$ Poisson process. A master clock which ticks whenever a local processor clock ticks is then governed by a rate $p$ Poisson process, and a time step in the algorithm is defined as whenever the master clock ticks. Since each master clock tick corresponds to the tick of some local clock on processor $i$, this in turn marks the time step at which processor $i$ ``pulls'' the parameter values from the uniformly randomly chosen processor $j_i$. Modeling the time steps by Poisson processes provide nice theoretical properties, i.e. the inter-tick time intervals are i.i.d. exponential variables of rate $p$, and the local clock $i$ which causes each master clock tick is i.i.d. drawn from $\{1, \cdots, p\}$, to name a few. For an in depth analysis of this model, and results that relate the master clock ticks to absolute time, please see [16]. \\

The main variant implemented is the pull-gossip with fresh parameters in subsection 6.2.2. The update at each time step is given by
\begin{align*}
&\left\{
\begin{array}{ll}
\theta_{t+1, i} = (1-\beta) \left( \theta_{t,i} - \alpha_t \left(\nabla f(\theta_{t,i}) + \xi_{t,i} \right) \right) + \beta \theta_{t, j_i} & \\
\theta_{t+1, k} = \theta_{t, k} & \, \text{ , for } \, k \neq i
\end{array}
\right. \\
&\text{where } \xi_{t,i} \text{ is the gradient noise } 
\end{align*}
Note in the implementation we use $\beta = 1/2$, however in the analysis we retain the $\beta$ parameter for generality. \\

We have the following convergence result. 

\begin{theorem}
Let $f$ be a $m$-strongly convex function with $L$-Lipschitz gradients. Assume that we can sample gradients $g = \nabla f(\theta; X_i) + \xi_i$ with additive noise with zero mean $\mathbb{E}[\xi_i] = 0$ and bounded variance $\mathbb{E}[\xi_i^T \xi_i] \leq \sigma^2$. Then, running the asynchronous pull-gossip algorithm with the time model as described, with constant step size $0 < \alpha \leq \tfrac{2}{m+L}$, the expected sum of squares convergence of the local parameters to the optimal $\theta_\ast$ is bounded by 
\begin{align}
\mathbb{E}[\Vert \theta_t - \theta_\ast \mathbf{1} \Vert^2] \leq \left( 1 - \frac{2\alpha}{p} \frac{mL}{m+L} \right)^t \Vert \theta_0 - \theta_\ast \mathbf{1} \Vert^2 + p \alpha \sigma^2 \frac{m+L}{2mL}
\end{align}
Furthermore, with the additional assumption that the gradients are uniformly bounded as $\sup \vert \nabla f(\theta) \vert \leq C$, the expected sum of squares convergence of the local parameters to the mean $\bar{\theta}_t$ is bounded by
\begin{align}
\mathbb{E}[\Vert \theta_t - \bar{\theta}_t \mathbf{1} \Vert^2] &\leq \left( \lambda \left( 1- \frac{\alpha \, m}{p} \right) \right)^t \Vert \theta_0 - \bar{\theta}_0 \mathbf{1} \Vert^2  + \frac{\lambda \alpha^2 (C^2+\sigma^2)}{1 - \lambda \left( 1- \frac{\alpha \, m}{p} \right)} \\
\text{ where } \lambda &= 1 - \frac{2\beta(1-\beta)}{p} - \frac{2 \beta^2}{p}
\end{align}
\end{theorem}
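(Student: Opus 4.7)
The plan is to mirror the synchronous pull-gossip proof of the preceding subsection, using the fact that in the Poisson clock model exactly one uniformly chosen processor $i$ updates at each master tick. Conditioning on $\theta_t$ and on the identity $i$ of the ticking processor, only the $i$-th block of the stacked vector changes, so
\begin{equation*}
\|\theta_{t+1}-\theta_\ast\mathbf{1}\|^2 - \|\theta_t-\theta_\ast\mathbf{1}\|^2 = \|\theta_{t+1,i}-\theta_\ast\|^2 - \|\theta_{t,i}-\theta_\ast\|^2,
\end{equation*}
and averaging over the ticking processor attaches a factor $1/p$ to every per-coordinate change. Expanding the fresh-parameter update $\theta_{t+1,i}=(1-\beta)(\theta_{t,i}-\alpha(\nabla f(\theta_{t,i})+\xi_{t,i}))+\beta\theta_{t,j_i}$ and taking expectation over $j_i$ and $\xi_{t,i}$, the noise cross term vanishes by $\mathbb{E}[\xi_{t,i}]=0$, the noise square contributes $\alpha^2\sigma^2$, and the gradient inner product is controlled by the strong-convexity plus Lipschitz-gradient co-coercivity inequality already invoked in the synchronous proof. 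Choosing $\alpha\le 2/(m+L)$ kills the quadratic gradient piece and yields a per-tick recursion of the form $\mathbb{E}[\|\theta_{t+1}-\theta_\ast\mathbf{1}\|^2\mid\theta_t]\le (1-\tfrac{2\alpha}{p}\tfrac{mL}{m+L})\|\theta_t-\theta_\ast\mathbf{1}\|^2+\alpha^2\sigma^2$; unrolling as in the synchronous case gives the first claim, with the factor $1/p$ being the only structural change.

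\textbf{Second bound.} I would work in the orthogonal complement of $\mathbf{1}$ via the projection $P=I-\tfrac{1}{p}\mathbf{1}\mathbf{1}^T$, since $\theta_t-\bar\theta_t\mathbf{1}=P\theta_t$. Writing the gossip-only part of the update as $\theta\mapsto M_{i,j_i}\theta$ with $M_{i,j_i}=I-\beta\,e_i(e_i-e_{j_i})^T$ (right-stochastic, so $M_{i,j_i}\mathbf{1}=\mathbf{1}$ and hence $PM_{i,j_i}=PM_{i,j_i}P$), a direct expectation over $i$ uniform on $\{1,\ldots,p\}$ and $j_i$ uniform on the same set yields $\mathbb{E}[P\,M_{i,j_i}^T P\,M_{i,j_i} P]$ whose largest eigenvalue on $\mathbf{1}^\perp$ is exactly $\lambda=1-\tfrac{2\beta(1-\beta)}{p}-\tfrac{2\beta^2}{p}$; the two summands come from the diagonal $e_i e_i^T$ and off-diagonal $e_i e_{j_i}^T$ contributions after projection. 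The gradient step enters only on coordinate $i$, and strong convexity at that coordinate produces the extra factor $1-\alpha m/p$; the assumptions $|\nabla f|\le C$ and $\mathbb{E}[\xi^T\xi]\le\sigma^2$ combine into an additive $\lambda\alpha^2(C^2+\sigma^2)$ after projection. The recursion
\begin{equation*}
\mathbb{E}[\|P\theta_{t+1}\|^2\mid\theta_t]\le \lambda(1-\alpha m/p)\|P\theta_t\|^2+\lambda\alpha^2(C^2+\sigma^2)
\end{equation*}
then unrolls to a geometric series in $\lambda(1-\alpha m/p)<1$, giving the claimed bound.

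\textbf{Main obstacle.} The delicate step is not the co-coercivity manipulation (identical to the synchronous case) but the computation of $\mathbb{E}[P\,M^T P\,M P]$ and its spectral reduction to the scalar $\lambda$. Because the gossip and gradient steps are coupled through the same randomly chosen processor $i$, the two contractions cannot be factored as independent operators, and one must carefully verify that the cross terms between the projected gradient step and the gossip contraction are nonpositive in expectation, so that the final bound splits cleanly as the product $\lambda(1-\alpha m/p)$ on the quadratic part, with only $\lambda\alpha^2(C^2+\sigma^2)$ as the additive noise budget. Apart from this bookkeeping, the proof is a direct adaptation of the synchronous argument with the ticking probability $1/p$ substituted everywhere.
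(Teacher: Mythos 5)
Your treatment of the second bound is essentially the paper's own argument: the paper likewise writes $\Vert \theta_{t+1}-\bar{\theta}_{t+1}\mathbf{1}\Vert^2=(\theta_t-\alpha g_t)^T\left(D^TD-\tfrac{1}{p}D^T\mathbf{1}\mathbf{1}^TD\right)(\theta_t-\alpha g_t)$ with $D=I+\beta e_i(e_{j_i}-e_i)^T$ (your $\mathbb{E}[P M^T P M P]$ is the same object, since $D^TD-\tfrac1p D^T\mathbf{1}\mathbf{1}^TD=D^TPD$), bounds the quadratic form by $\lambda\Vert(\theta_t-\bar{\theta}_t\mathbf{1})-\alpha g_t\Vert^2$, and then uses strong convexity at each coordinate together with Jensen (to drop $\sum_i\bigl(f(\bar{\theta}_t)-f(\theta_{t,i})\bigr)\le 0$, a step your sketch omits but which is minor) and the bounds $C,\sigma^2$ to obtain the recursion $\lambda(1-\alpha m/p)\Vert\cdot\Vert^2+\lambda\alpha^2(C^2+\sigma^2)$. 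One caveat you inherit from the paper: the direct eigenvalue computation gives $1-\tfrac{2\beta(1-\beta)}{p}-\tfrac{2\beta^2}{p^2}$ on $\mathbf{1}^\perp$, whereas the $\lambda$ used in the statement has $\tfrac{2\beta^2}{p}$; your claim that the eigenvalue is ``exactly'' $\lambda$ is not what the calculation yields.

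The genuine gap is in your first bound. You propose a per-coordinate expansion of $\theta_{t+1,i}=(1-\beta)\bigl(\theta_{t,i}-\alpha(\nabla f(\theta_{t,i})+\xi_{t,i})\bigr)+\beta\theta_{t,j_i}$, but you never say how the mixing term $\beta\theta_{t,j_i}$ is eliminated. If you expand the square and average over $j_i$, the gradient inner product $\nabla f(\theta_{t,i})^T(\theta_{t,i}-\theta_\ast)$ appears with weight $(1-\beta)^2$ together with a cross term proportional to $\beta(1-\beta)(\bar{\theta}_t-\theta_\ast)\nabla f(\theta_{t,i})$; if instead you split the convex combination by Jensen, the contraction degrades to $1-\tfrac{2(1-\beta)\alpha}{p}\tfrac{mL}{m+L}$. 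Either way the advertised constant $1-\tfrac{2\alpha}{p}\tfrac{mL}{m+L}$ does not fall out of the sketch as written. The paper obtains it by a different ordering of the bounds: it writes the full update as $\theta_{t+1}-\theta_\ast\mathbf{1}=D\bigl((\theta_t-\theta_\ast\mathbf{1})-\alpha g_t\bigr)$ with $g_t=g_{t,i}e_i$, first discards the gossip step using $\mathbb{E}[D^TD]=\bigl(1-\tfrac{2\beta(1-\beta)}{p}\bigr)I+\tfrac{2\beta(1-\beta)}{p^2}\mathbf{1}\mathbf{1}^T\preceq I$, and only then expands $\Vert(\theta_t-\theta_\ast\mathbf{1})-\alpha g_t\Vert^2$ and applies co-coercivity at each $\theta_{t,i}$, so the gradient enters with full weight and the $1/p$ comes solely from the uniform choice of the ticking processor. (Even there, the step $\mathbb{E}[y^TD^TDy]\le\mathbb{E}\Vert y\Vert^2$ quietly ignores that $D$ and $y$ share the random index $i$ --- precisely the coupling you flag, but for the second bound rather than the first.) To complete your route you would need either to adopt that matrix/spectral step, or to carry the $\bar{\theta}_t$ cross terms explicitly and accept a $\beta$-dependent contraction constant.
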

\begin{remark}
Note again that this bound is characteristic of SGD bounds, with the additional dependence on $p$, the number of processors.
\end{remark}

\begin{proof}
For simplicity of notation, we denote $g_{t,i} \equiv \nabla f(\theta_{t,i}) + \xi_{t,i}$. We can write the asynchronous iteration step in matrix form as 
\begin{align}
\begin{bmatrix} \theta_{t+1,1} - \theta_\ast \\ \vdots \\ \theta_{t+1,i} - \theta_\ast \\ \vdots \\ \theta_{t+1,j_i} - \theta_\ast \\ \vdots \\ \theta_{t+1,p} - \theta_\ast \end{bmatrix}
&= \begin{bmatrix} 1 & & & & & & \\ & \ddots & & & & & \\ & & 1-\beta & & \beta & & \\ & & & \ddots & & & \\ & & & & 1 & & \\ & & & & & \ddots & \\ & & & & & & 1 \end{bmatrix} 
\left( \begin{bmatrix} \theta_{t,1} - \theta_\ast \\ \vdots \\ \theta_{t,i} - \theta_\ast \\ \vdots \\ \theta_{t,j_i} - \theta_\ast \\ \vdots \\ \theta_{t,p} - \theta_\ast \end{bmatrix} - \alpha_t \begin{bmatrix} 0 \\ \vdots \\ g_{t,i} \\ \vdots \\ 0 \\ \vdots \\ 0 \end{bmatrix} \right) \\
&\equiv D_{i,j_i} \left( (\theta_t - \theta_\ast) - \alpha_t g_{t} \right) \label{eq:defofD}
\end{align}
The random matrix $D_{i,j_i}$ depends on the indices $i$ and $j_i$, both of which are uniformly randomly drawn from $\{1, \cdots, p\}$. For notational convenience we will drop the subscripts, but we keep in mind that the expectation below is taken with respect to the randomly chosen indices. We can express the matrix as 
\begin{align}
D &= I + \beta e_i(e_{}j_i - e_i)^T 
\end{align}
and compute its second moment as
\begin{align}
\mathbb{E}[D^TD] &= \left( 1 - \frac{2 \beta (1-\beta)}{p} \right) \mathbf{I} + \frac{2 \beta (1-\beta)}{p^2} \mathbf{1} \mathbf{1}^T \\
&= Q \begin{bmatrix} 1 & & & \\ & 1-\frac{2\beta(1-\beta)}{p} & & \\ & & \ddots & \\ & & & 1-\frac{2\beta(1-\beta)}{p} \end{bmatrix} Q^T \label{eq:asyncdiag}
\end{align}
The orthogonal diagonalization reveals that the eigenvalues of this matrix are bounded by $1-\tfrac{2 \beta(1-\beta)}{p} \leq \lambda_i \leq 1$. \\

Using \eqref{eq:asyncdiag}, we can expand and bound the expected sum of squares deviation by
\begin{align}
&\mathbb{E}[\Vert \theta_{t+1} - \theta_\ast \mathbf{1} \Vert^2 \vert \theta_t] \nonumber \\
&= \mathbb{E} \left[ ((\theta_t - \theta_\ast \mathbf{1}) - \alpha_t g_{t})^T D^TD ((\theta_t - \theta_\ast \mathbf{1}) - \alpha_t g_{t}) \right] \\
&\leq \mathbb{E} \left[ \Vert (\theta_t - \theta_\ast \mathbf{1}) - \alpha_t g_{t} \Vert^2 \right] \\
&= \Vert \theta_t - \theta_\ast \mathbf{1} \Vert^2 - \frac{2 \alpha_t}{p} \sum_{i=1}^p \nabla f(\theta_{t,i})^T (\theta_{t,i} - \theta_\ast) + \frac{\alpha_t^2}{p} \sum_{i=1}^p \nabla f(\theta_{t,i})^T \nabla f(\theta_{t,i}) + \frac{\alpha_t^2}{p} \sum_{i=1}^p \xi_{t,i}^T \xi_{t,i} \label{eq:asyncrecursion}
\end{align}
where in the last line, we have dropped terms that are linear in $\xi_{t,i}$ by using the zero mean assumption. Making use of the strong convexity inequality \eqref{eq:stronglyconvexssumption}, we can bound the second term in the above sum by 
\[ -\nabla f(\theta_{t,i})^T (\theta_{t,i} - \theta_\ast) \leq -\frac{mL}{m+L} (\theta_{t,i} - \theta_\ast)^T (\theta_{t,i} - \theta_\ast) - \frac{1}{m+L} \nabla f(\theta_{t,i})^T \nabla f(\theta_{t,i}) \]
and rearrange the terms in \eqref{eq:asyncrecursion} to derive
\begin{align}
\mathbb{E}[\Vert \theta_{t+1} - \theta_\ast \mathbf{1} \Vert^2 \vert \theta_t] &\leq \left(1 - \frac{2 \alpha_t}{p} \frac{mL}{m+L} \right) \Vert \theta_t - \theta_\ast \mathbf{1} \Vert^2 \\
&+ \left( \frac{\alpha_t^2}{p} - \frac{2 \alpha_t}{p} \frac{1}{m+L} \right) \sum_{i=1}^p \nabla f(\theta_{t,i})^T \nabla f(\theta_{t,i}) \label{eq:asyncdisappear} \\
&+ \alpha_t^2 \sigma^2 
\end{align}
The term in \eqref{eq:asyncdisappear} can be dropped if $\alpha_t$ is chosen sufficiently small, with the now familiar requirement that $0 < \alpha_t < \frac{2}{m+L}$. Assuming this is the case, we have
\begin{align}
\mathbb{E}[\Vert \theta_{t+1} - \theta_\ast \mathbf{1} \Vert^2 \vert \theta_t] &\leq \left(1 - \frac{2 \alpha_t}{p} \frac{mL}{m+L} \right) \Vert \theta_t - \theta_\ast \mathbf{1} \Vert^2 + \alpha_t^2 \sigma^2  \label{eq:asyncrecurse} 
\end{align}

Assuming a constant step size $\alpha_t \equiv \alpha$, we can use the law of iterated expectations to unroll the recursion, giving
\begin{align}
\mathbb{E}[\Vert \theta_t - \theta_\ast \mathbf{1} \Vert^2] \leq \left( 1 - \frac{2\alpha}{p} \frac{mL}{m+L} \right)^t \Vert \theta_0 - \theta_\ast \mathbf{1} \Vert^2 + p \alpha \sigma^2 \frac{m+L}{2mL}
\end{align}
Note this is the same convergence rate guarantee as stochastic gradient descent with an extra factor of $p$, albeit the coordination amongst processors required to adjust the step size is unrealistic in practice. \\

Next, we prove the bound on the processors' local parameters' convergence to each other / to the mean, $\mathbb{E}[\Vert \theta_{t} - \bar{\theta}_t \mathbf{1} \Vert^2]$. \\

First, we write the spatial parameter average in vector form as 
\begin{align}
\bar{\theta}_t = \frac{1}{p} \mathbf{1}^T \theta_t 
\end{align}

With $D$ as previously defined in \eqref{eq:defofD}, we can write
\begin{align}
\bar{\theta}_{t+1} \mathbf{1} &= \frac{1}{p} \mathbf{1}^T \theta_{t+1} \mathbf{1} = \frac{1}{p} \mathbf{1}^T D(\theta_t - \alpha g_t) \mathbf{1} 
\end{align}
and so the term we wish to bound can be expanded as
\begin{align}
\Vert \theta_{t+1} - \bar{\theta}_{t+1} \mathbf{1} \Vert^2 &= (\theta_t - \alpha_t g_t)^T \left( D^T D - \frac{1}{p} D^T \mathbf{1} \mathbf{1}^T D \right) (\theta_t - \alpha_t g_t) \label{eq:asyncboundterm}
\end{align}
In addition to the diagonalization we previous calculated in \eqref{eq:asyncdiag} (reproduced below for convenience), we calculate some other useful diagonalizations. 
\begin{align}
\mathbb{E}[D^TD] &= \left( 1 - \frac{2 \beta (1-\beta)}{p} \right) \mathbf{I} + \frac{2 \beta (1-\beta)}{p^2} \mathbf{1} \mathbf{1}^T \\
&= Q \begin{bmatrix} 1 & & & \\ & 1-\frac{2\beta(1-\beta)}{p} & & \\ & & \ddots & \\ & & & 1-\frac{2\beta(1-\beta)}{p} \end{bmatrix} Q^T \\
\mathbb{E}[D^T \mathbf{1} \mathbf{1}^T D] &= \frac{2 \beta^2}{p} \mathbf{I} + \left( 1 - \frac{2 \beta^2}{p^2} \right) \mathbf{1} \mathbf{1}^T \\
&= Q \begin{bmatrix} p & & & \\ & \frac{2\beta^2}{p} & & \\ & & \ddots & \\ & & & \frac{2 \beta^2}{p} \end{bmatrix} Q^T \\ 
\mathbb{E}\left[ D^T D - \frac{1}{p} D^T \mathbf{1} \mathbf{1}^T D \right] &= \left( 1 - \frac{2 \beta (1-\beta)}{p} - \frac{2 \beta^2}{p^2} \right) \mathbf{I} + \left( \frac{2 \beta (1-\beta)}{p^2} - \frac{1}{p} \left( 1 - \frac{2 \beta^2}{p^2} \right) \right) \mathbf{1} \mathbf{1}^T \\
&= Q \begin{bmatrix} 0 & & & \\ & 1 - \frac{2 \beta(1-\beta)}{p} - \frac{2\beta^2}{p^2} & & \\ & & \ddots & \\ & & & 1 - \frac{2 \beta(1-\beta)}{p} - \frac{2\beta^2}{p^2} \end{bmatrix} Q^T \label{eq:asyncdiagtwo}
\end{align}
We note that all three matrices are diagonalized by the same orthogonal matrix $Q$. Furthermore, the first eigenvector, corresponding to the eigenvalues $1$, $p$, $0$ respectively, is $q_1 = \frac{1}{\sqrt{p}}\mathbf{1}$. \\

Continuing from \eqref{eq:asyncboundterm}, we take the expectation and use \eqref{eq:asyncdiagtwo} to further bound the expression. In the computation below we use $\lambda \equiv 1 - \frac{2\beta(1-\beta)}{p} - \frac{2 \beta^2}{p}$ as a notational shorthand.
\begin{align}
\nonumber & \mathbb{E}[\Vert \theta_{t+1} - \bar{\theta}_{t+1} \mathbf{1} \Vert^2 \vert \theta_{t}] \\
&=  \mathbb{E} \left[ (\theta_t - \alpha_t g_t)^T \left( D^T D - \frac{1}{p} D^T \mathbf{1} \mathbf{1}^T D \right) (\theta_t - \alpha_t g_t) \right] \\
&\leq \mathbb{E}\left[ \lambda \Vert (\theta_t - \bar{\theta}_t \mathbf{1}) - \alpha_t g_t \Vert^2 \right] \\
&= \lambda \left( \Vert \theta_t - \bar{\theta}_t \mathbf{1} \Vert^2 - \frac{2 \alpha_t}{p} \sum_{i=1}^p \nabla f(\theta_{t,i})^T (\theta_{t,i} - \bar{\theta}_t) + \frac{\alpha_t^2}{p} \sum_{i=1}^p \nabla f(\theta_{t,i})^T \nabla f(\theta_{t,i}) + \frac{\alpha_t^2}{p} \sum_{i=1}^p \xi_{t,i}^T \xi_{t,i} \right) \label{eq:strongconvexity} \\
&\leq \lambda \left( \Vert \theta_t - \bar{\theta}_t \mathbf{1} \Vert^2 + \frac{2 \alpha_t}{p} \sum_{i=1}^p \left( f(\bar{\theta}_t) - f(\theta_{t,i}) \right) - \frac{\alpha_t m}{p} \sum_{i=1}^p (\theta_{i,t} - \bar{\theta}_t)^2 + \frac{\alpha_t^2}{p} \sum_{i=1}^p (C^2+\sigma^2) \right)  \label{eq:convexity} \\
&\leq \lambda \left( 1- \frac{\alpha_t \, m}{p} \right) \Vert \theta_t - \bar{\theta}_t \mathbf{1} \Vert^2 + \lambda \alpha_t^2 (C^2+\sigma^2) 
\end{align}
In \eqref{eq:strongconvexity} and \eqref{eq:convexity}, we have used the definition of strong convexity and convexity respectively. \\

Finally, taking constant step size $\alpha_t \equiv \alpha$, and using the law of iterated expectations to unroll the recursion, we have
\begin{align}
\mathbb{E}[\Vert \theta_t - \bar{\theta}_t \mathbf{1} \Vert^2] &\leq \left( \lambda \left( 1- \frac{\alpha \, m}{p} \right) \right)^t \Vert \theta_0 - \bar{\theta}_0 \mathbf{1} \Vert^2  + \frac{\lambda \alpha^2 (C^2+\sigma^2)}{1 - \lambda \left( 1- \frac{\alpha \, m}{p} \right)}
\end{align}
When the step size $\alpha$ is small and the number of processors $p$ is large, the quantity $\lambda \left( 1- \frac{\alpha \, m}{p} \right) = \left( 1 - \frac{2\beta(1-\beta)}{p} - \frac{2 \beta^2}{p} \right) \left( 1- \frac{\alpha \, m}{p} \right)$ is well approximated by $1 - \frac{2 \beta (1-\beta)}{p} - \frac{2 \beta^2}{p} - \frac{\alpha m}{p}$, which makes clear the dependence of the rate on the parameter $p$.
\end{proof}

\subsection{Gossip variants}


\subsubsection{Gossip with stale parameters (gradient step and gossip at the same time)}

Consider a one-step distributed consensus gradient update:
\begin{align}
  \theta_{i,t+1}
  &=\theta_{i,t} - \alpha\nabla f(\theta_{i,t};X_i) - \beta\left( \theta_{i,t}-\frac{1}{p}\sum_{j=1}^p \theta_{j,t} \right).
\end{align}
If we replace the distributed mean $\frac{1}{p}\sum_{j=1}^p\theta_{j,t}$ with
an unbiased one-sample estimator $\theta_{j_{i,t},t}$,
such that $j_{i,t}\sim\text{Uniform}(\{1,\ldots,p\})$
and $\mathbb E[\theta_{j_{i,t},t}]=\frac{1}{p}\sum_{j=1}^p\theta_{j,t}$,
then we derive the gossiping SGD update:
\begin{align}
  \theta_{i,t+1}
  &=\theta_{i,t} - \alpha\nabla f(\theta_{i,t};X_i) - \beta ( \theta_{i,t}-\theta_{j_{i,t},t} ) \\
  &=(1-\beta)\theta_{i,t} + \beta\theta_{j_{i,t},t} - \alpha\nabla f(\theta_{i,t};X_i).
\end{align}

\subsubsection{Gossip with fresh parameters (gradient step before gossip)}
\label{sec:fresh}


Consider a two-step distributed consensus gradient update:
\begin{align}
  \theta_{i,t}'
  &=\theta_{i,t} - \alpha\nabla f(\theta_{i,t};X_i) \\
  \theta_{i,t+1}
  &=\theta_{i,t}' - \beta\left( \theta_{i,t}'-\frac{1}{p}\sum_{j=1}^p \theta_{j,t}' \right).
\end{align}
If we replace the distributed mean $\frac{1}{p}\sum_{j=1}^p\theta_{j,t}'$ with
an unbiased one-sample estimator $\theta_{j_{i,t},t}'$,
such that $j_{i,t}\sim\text{Uniform}(\{1,\ldots,p\})$
and $\mathbb E[\theta_{j_{i,t},t}']=\frac{1}{p}\sum_{j=1}^p\theta_{j,t}'$,
then we derive the gossiping SGD update:
\begin{align}
  \theta_{i,t}'
  &=\theta_{i,t} - \alpha\nabla f(\theta_{i,t};X_i) \\
  \theta_{i,t+1}
  &=\theta_{i,t}' - \beta ( \theta_{i,t}'-\theta_{j_{i,t},t}' ) \\
  &=(1-\beta)\theta_{i,t}' + \beta\theta_{j_{i,t},t}'.
\end{align}

\subsection{Implementation details}

We provide some more details on our implementation of deep convolutional
neural network training in general.

\subsubsection{ImageNet data augmentation}

We found that multi-scale training could be a significant performance bottleneck
due to the computational overhead of resizing images,
even when using multiple threads and asynchronous data loading.
To remedy this, we used fast CUDA implementations of linear and cubic
interpolation filters to perform image scaling during training on the GPU.
We also preprocessed ImageNet images such that their largest dimension was
no larger than the maximum scale (in our case, 480 pixels).

\subsubsection{ResNet implementation}

We implemented ResNet-18 using stacked residual convolutional layers with
$1\times1$ projection shortcuts.
We used the convolution and batch normalization kernels from cuDNNv4.
The highest ImageNet validation set accuracy (center crop, top-1) our
implementation of ResNets achieved was about 68.7\% with the aforementioned
multi-scale data augmentation;
we note that researchers at Facebook independently reproduced ResNet with a
more sophisticated data augmentation scheme and achieved 69.6\% accuracy
using the same evaluation methodology on their version of ResNet-18.%
\footnote{
  \texttt{https://github.com/facebook/fb.resnet.torch}
}

\end{document}